\def\eqref#1{equation~\ref{#1}}
\def\1{\bm{1}}
\DeclareMathAlphabet{\mathsfit}{\encodingdefault}{\sfdefault}{m}{sl}
\SetMathAlphabet{\mathsfit}{bold}{\encodingdefault}{\sfdefault}{bx}{n}
\def\gL{{\mathcal{L}}}
\def\gN{{\mathcal{N}}}
\newcommand{\E}{\mathbb{E}}
\newcommand{\R}{\mathbb{R}}
\newcommand{\inlinefigbottomvspace}{{\vspace{-10pt}}} 
\newcommand{\inlinefigcaptionvspace}{{\vspace{-10pt}}}
\newcommand{\secvspace}{{\vspace{-5pt}}}
\newcommand{\subsecvspace}{{\vspace{-5pt}}}
\newtheorem{proposition}{Proposition}[section]
\newcommand{\ind}{\text{Index}} 
\newcommand{\map}{m} 
\newcommand{\proj}{M} 
\newcommand{\lift}{M^{+}} 
\newcommand{\graph}{\mathcal{G}} 
\newcommand{\node}{\mathcal{V}}
\newcommand{\edge}{\mathcal{E}}
\newcommand{\encinvout}{H^{\phi} }
\newcommand{\priorinvout}{H^{\psi} }
\newcommand{\decequiout}{V^{\theta}}
\newcommand{\reqone}{\textbf{R1}\xspace}
\newcommand{\reqtwo}{\textbf{R2}\xspace}
\newcommand{\tx}{\widetilde{x}}
\newcommand{\dtx}{ {\Delta \widetilde{x}}}
\newcommand{\dtxi}{ {\Delta \widetilde{x}_i}}
\newcommand{\enc}{\mathbf{Enc}_\phi}
\newcommand{\dec}{\mathbf{Dec}_\theta}
\newcommand{\decoder}{\textnormal{Dec}}
\newcommand{\rbf}{\textnormal{RBF}}
\newcommand{\prior}{\mathbf{Prior}_\psi}
\newcommand{\dcut}{d_{\textnormal{cut}}}
\newcommand{\Dcut}{D_{\textnormal{cut}}}
\newcommand{\rmsdgen}{\textnormal{RMSD}_{\textnormal{gen}}}
\newcommand{\msg}{\textnormal{Msg}}
\newcommand{\update}{\textnormal{Update}}
\newcommand{\eupdate}{\sigma}
\definecolor{codegreen}{rgb}{0,0.6,0}
\definecolor{codegray}{rgb}{0.5,0.5,0.5}
\definecolor{codepurple}{rgb}{0.58,0,0.82}
\definecolor{backcolour}{rgb}{0.95,0.95,0.92}
\lstdefinestyle{mystyle}{
    backgroundcolor=\color{backcolour},   
    commentstyle=\color{codegreen},
    keywordstyle=\color{magenta},
    numberstyle=\tiny\color{codegray},
    stringstyle=\color{codepurple},
    basicstyle=\ttfamily\footnotesize,
    breakatwhitespace=false,         
    breaklines=true,                 
    captionpos=b,                    
    keepspaces=true,                 
    numbers=left,                    
    numbersep=5pt,                  
    showspaces=false,                
    showstringspaces=false,
    showtabs=false,                  
    tabsize=2
}
\let\save@mathaccent\mathaccent
\newcommand*\if@single[3]{%
  \setbox0\hbox{${\mathaccent"0362{#1}}^H$}%
  \setbox2\hbox{${\mathaccent"0362{\kern0pt#1}}^H$}%
  \ifdim\ht0=\ht2 #3\else #2\fi
  }
\newcommand*\rel@kern[1]{\kern#1\dimexpr\macc@kerna}
\newcommand*\widebar[1]{\@ifnextchar^{{\wide@bar{#1}{0}}}{\wide@bar{#1}{1}}}
\newcommand*\wide@bar[2]{\if@single{#1}{\wide@bar@{#1}{#2}{1}}{\wide@bar@{#1}{#2}{2}}}
\newcommand*\wide@bar@[3]{%
  \begingroup
  \def\mathaccent##1##2{%
    \let\mathaccent\save@mathaccent
    \if#32 \let\macc@nucleus\first@char \fi
    \setbox\z@\hbox{$\macc@style{\macc@nucleus}_{}$}%
    \setbox\tw@\hbox{$\macc@style{\macc@nucleus}{}_{}$}%
    \dimen@\wd\tw@
    \advance\dimen@-\wd\z@
    \divide\dimen@ 3
    \@tempdima\wd\tw@
    \advance\@tempdima-\scriptspace
    \divide\@tempdima 10
    \advance\dimen@-\@tempdima
    \ifdim\dimen@>\z@ \dimen@0pt\fi
    \rel@kern{0.6}\kern-\dimen@
    \if#31
      \overline{\rel@kern{-0.6}\kern\dimen@\macc@nucleus\rel@kern{0.4}\kern\dimen@}%
      \advance\dimen@0.4\dimexpr\macc@kerna
      \let\final@kern#2%
      \ifdim\dimen@<\z@ \let\final@kern1\fi
      \if\final@kern1 \kern-\dimen@\fi
    \else
      \overline{\rel@kern{-0.6}\kern\dimen@#1}%
    \fi
  }%
  \macc@depth\@ne
  \let\math@bgroup\@empty \let\math@egroup\macc@set@skewchar
  \mathsurround\z@ \frozen@everymath{\mathgroup\macc@group\relax}%
  \macc@set@skewchar\relax
  \let\mathaccentV\macc@nested@a
  \if#31
    \macc@nested@a\relax111{#1}%
  \else
    \def\gobble@till@marker##1\endmarker{}%
    \futurelet\first@char\gobble@till@marker#1\endmarker
    \ifcat\noexpand\first@char A\else
      \def\first@char{}%
    \fi
    \macc@nested@a\relax111{\first@char}%
  \fi
  \endgroup
}
\icmltitlerunning{Generative Coarse-Graining of Molecular Conformations}
\begin{document}

\twocolumn[
\icmltitle{Generative Coarse-Graining of Molecular Conformations}




\begin{icmlauthorlist}
\icmlauthor{Wujie Wang}{mit}
\icmlauthor{Minkai Xu}{mila,montreal}
\icmlauthor{Chen Cai}{ucsd}
\icmlauthor{Benjamin Kurt Miller}{uva}
\icmlauthor{Tess Smidt}{mit}
\icmlauthor{Yusu Wang}{ucsd}
\icmlauthor{Jian Tang}{mila,hec,cifar}
\icmlauthor{Rafael G\'omez-Bombarelli}{mit}
\end{icmlauthorlist}

\icmlaffiliation{mit}{Massachusetts Institute of Technology, USA}
\icmlaffiliation{mila}{Mila - Qu\'ebec AI Institute, Canada}
\icmlaffiliation{montreal}{Universite de Montr\'eal, Canada}
\icmlaffiliation{ucsd}{University of California San Diego, USA}
\icmlaffiliation{hec}{HEC Montr\'eal, Canada}
\icmlaffiliation{cifar}{CIFAR AI Chair, Canada}
\icmlaffiliation{uva}{University of Amsterdam, Netherlands}

\icmlcorrespondingauthor{Rafael G\'omez-Bombarelli}{rafagb.mit.edu}

\icmlkeywords{Molecular Dynamics, Coarse-Graining, Generative Models, Equivariance}

\vskip 0.3in
]



\printAffiliationsAndNotice{}  




\begin{abstract}
Coarse-graining (CG) of molecular simulations simplifies the particle representation by grouping selected atoms into pseudo-beads and drastically accelerates simulation. However, such CG procedure induces information losses, which makes accurate backmapping, i.e., restoring fine-grained (FG) coordinates from CG coordinates, a long-standing challenge. Inspired by the recent progress in generative models and equivariant networks, we propose a novel model that rigorously embeds the vital probabilistic nature and geometric consistency requirements of the backmapping transformation. Our model encodes the FG uncertainties into an invariant latent space and decodes them back to FG geometries via equivariant convolutions. To standardize the evaluation of this domain, we provide three comprehensive benchmarks based on molecular dynamics trajectories. Experiments show that our approach always recovers more realistic structures and outperforms existing data-driven methods with a significant margin.
\end{abstract}

\section{Introduction}
\secvspace 
Defined as an operation to perform dimension reductions over continuous distributions, coarse-graining (CG) was first proposed by Paul Ehrenfest and Tatyana Afanasyeva in their study of molecular chaos~\citep{ehrenfest1959collected}. After nearly a century of theoretical developments, CG has become a powerful technique to simplify many complicated problems in physics. Useful applications of CG include the renormalization group techniques for critical phenomena~\citep{kadanoff1966scaling, wilson1971renormalization} and the usage of collective variables to study reaction dynamics~\citep{laio2002escaping}.  In molecular simulations, CG refers to the simplification of the particle representation by lumping groups of original atoms $x$ into individual beads $X$ with a rule-based CG mapping. CG molecular dynamics (MD) can significantly speed up computational discovery over chemical spaces 
with simpler combination rules~\citep{marrink2007martini, menichetti2019drug}; for simulations of large molecules with hundreds of thousands of atoms such as biological and artificial polymers, CG MD allows accessing the long time scales of phenomena like protein folding or polymer reptation~\citep{levitt1975computer, kremer1988crossover}. However, such acceleration comes at the cost of losing fine-grained (FG) atomic details, which are important for studying properties and interactions of atom-level structures or continuing simulations at FG scale. How to accurately recover the FG structure $x$ from CG coordinates $X$ remains a challenging problem. 



\begin{figure}
    \includegraphics[width=\linewidth]{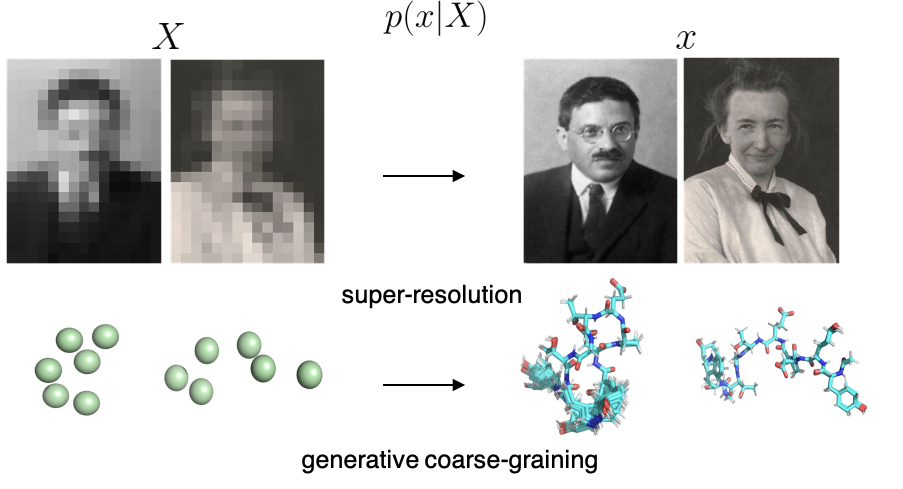}
    \caption[Caption for LOF]{An analogical illustration of the tackled generative coarse-graining problem. \textbf{Top}: Image super-resolution recovers higher resolution portraits\footnotemark. \textbf{Bottom}: Generative CG generates fine-grained molecular structures.}
    \label{fig:superres}
    \vspace{-10pt}
\end{figure}
\footnotetext{Portraits of P. Ehrenfest and T. Afanasyeva, taken from Dibner Library of the History of Science and Technology.}

So far, several methods have been proposed for the backmapping problem based on random projection~\citep{wassenaar2014going} or geometric rules~\citep{shih2007disassembly}. However, these methods usually can only produce poor initial geometries which require subsequent restraining molecular dynamics \cite{shih2007disassembly, brocos2012multiscale}, incurring considerable computational cost and large deviation from the original structures. These methods also require maintaining system-specific fragment libraries that are only applicable to a predefined mapping schema, preventing their usage from broader mapping choices and resolutions. Furthermore, 
none of these methods are data-driven, and geometries produced by these methods are biased toward predefined fragment geometries. 
To produce data-informed backampping solutions, several machine learning methods have recently been proposed \citep{wang2019coarse, an2020machine} to use parameterized functions to deterministically backmap. However, these methods also suffer from low-quality geometries and have not been tested on complex molecular structures. 
We argue that the problem is very challenging mainly because of the following:
\textbf{(C1) Stochasticity of backmapping}: Due to surjective nature of CG projections, many different FG configurations can be mapped to the same CG conformation, hence the reverse generative map is one-to-many. However, this vital consideration on stochasticity has been overlooked in current deterministic methods~\citep{brocos2012multiscale, wassenaar2014going, wang2019coarse, an2020machine}, making it hard to generate diverse structures that capture the underlying FG distributions.
\textbf{(C2) Geometry consistency}: The backmapped FG coordinates should faithfully represent the underlying CG geometry, which requires that they can be reduced back to the original CG structures. Additionally, given the fact that CG transformation is $E(3)$ equivariant (see details in \cref{section:cg_def}), the backmapping function should also be $E(3)$ equivariant. A schematic illustration is given in \cref{fig:consistency}. However, these geometric consistency constraints have not been considered by existing methods. Failure to satisfy these requirements leads to unrealistic FG geometries that are not self-consistent. More discussions can be found in \cref{sec:consistency}.
\textbf{(C3) Generality w.r.t mapping protocols and resolutions:}
The choice of dimensionality reduction degree for FG to CG mappings varies wildly across CG modeling practices, depending on desired accuracy and simulation speed~\citep{marrink2007martini,  bereau2015automated, souza2021martini}. Therefore, it is desirable to have a general backmapping approach that is compatible with arbitrary CG mapping choices, for better usability under different CG simulation protocols.

\begin{figure}
\centering
\includegraphics[width=\linewidth]{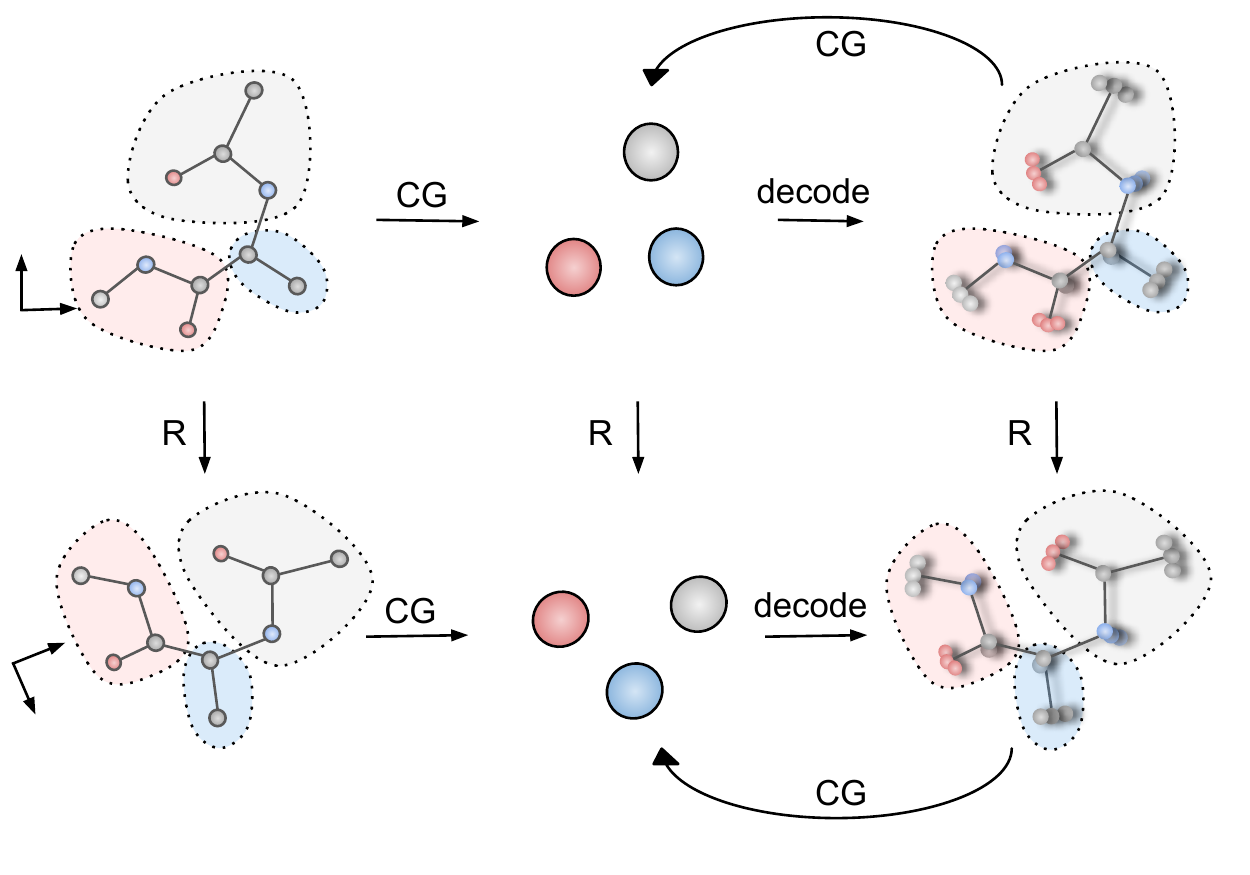}
\vspace{-20pt}
\caption{A diagram illustrating the geometric constraints for the backmapping function: 1) The backmapped coordinates should map back to the original CG coordinates; 2) because the CG transformation is equivariant, the backmapping transformation should also be equivariant.}
\inlinefigcaptionvspace
\label{fig:consistency}
\end{figure}

To address the requirements and challenges mentioned above, we propose a novel probabilistic model named Coarse-Graining Variational Auto-Encoder (CGVAE), a principled data-driven framework to infer atomistic coordinates from coarse-grained structures. We formulate the stochastic backmapping as a conditional generative task, i.e., modeling the conditional distribution $p_\theta(x|X)$ of FG molecular structures $x$ conditioned on CG structures $X$. To model the highly complex $p_\theta(x|X)$, we factorize the conditional distribution $p_\theta(x|X)$ as a latent variable model and parameterize a generative map for molecular geometries. This is achieved with $E(3)$ invariant encoding and equivariant decoding operations, using the message passing framework ~\citep{gilmer2017neural}. Our model generates coordinates in a one-shot fashion, requiring no predefined fragment geometry libraries to guide the backmapping.

Our contributions can be summarized as follows: 
\begin{itemize}[itemsep=0pt,topsep=0pt]
    \item We provide a principled probabilistic formulation for the backmapping problem of molecular conformations, and propose how to approximate this conditional distribution with a latent variable model in CGVAE (\textbf{C2}).
    \item We design a rigorous and expressive backmapping function that incorporates equivariant convolutions to satisfy the geometry consistency criteria (\textbf{C2}).
    \item CGVAE is CG mapping-agnostic (\textbf{C3}) and can be applied to diverse mapping protocols used in MD simulations in practice.
    \item To facilitate further developments in the field, we propose two benchmark datasets and formulate a suite of metrics to evaluate the quality of generated FG structures.
\end{itemize}
Experiments on proposed benchmarks show that our model can consistently achieve superior performance compared with previous data-driven methods.

\section{Related Works}
\secvspace 
Reverse algorithms from CG to FG usually require two steps: backmapping projection followed by force field optimization ~\citep{guttenberg2013minimizing}. The projection typically relies on a deterministic linear backmapping operation \citep{wassenaar2014going, rzepiela2010reconstruction, brocos2012multiscale} or a parameterized deterministic function learned by supervised learning \citep{wang2019coarse, an2020machine}. However, these methods often produce invalid structures because they lack equivariance and chemical rule supervision, and thus rely on intensive force field equilibration of the backmapped systems \citep{rzepiela2010reconstruction, wassenaar2014going}. Recently, probabilistic backmapping (\textbf{C1}) has been incorporated to generate FG coordinates, but its use is limited to condensed phase systems with voxelized representation as model inputs \citep{stieffenhofer2020adversarial}.
To date, there have not been works that considered \textbf{C1-C3} all at once.

\section{Preliminaries}
\secvspace 
\subsection{CG Representations of Molecular Structures}
\label{section:cg_def}
\subsecvspace 

We represent fine-grained (FG) systems as atomistic coordinates $x = \{x_i\}_{i=1}^n \in \R^{n \times 3}$. Similarly, the coarse-grained (CG) systems are represented by $X = \{X_i\}_{i=1}^N \in \R^{N \times 3}$ where $N < n$.
Let $ [n] $ and $[N]$ denote the set $\{1, 2, ..., n\}$ and $\{1, 2, ..., N\}$ respectively. CG operation of molecular simulations can be defined as an assignment $\map: [n] \rightarrow [N]$, which maps each FG atom $i$ in  $ [n] $ to CG atom $I \in [N]$, i.e., the CG node $I$ is composed from an ordered set of atoms $C_{I} = (k \in [n] \mid \map(k) = I )$. To ensure that particle mass and momentum are defined consistently with CG~\cite{noid2008multiscale}, each atom $i$ can only contribute to at most one $I$.  This CG projection operation can be represented as $X = \proj x$. $\proj \in \R^{N \times n}$ with $M_{I, i} = \frac{w_i}{\sum_{j \in C_I} w_j}$ if $i \in C_I$ and 0 otherwise. Here, $w_{i}$ is the projection weight of atom $i$. It can be either the atomic mass $i$ or simply $1$, which corresponds to placing $X_I$ at the center of mass or the center of geometry of $C_I$. The normalization term in $\proj$ ensures that coarse coordinates are the weighted averages of FG geometries. 

\subsection{Geometric Requirements of Backmapping}
\label{sec:consistency}
\subsecvspace 

In this part, we discuss the symmetry properties of CG transformation to identify the geometric requirements to design the backmapping decoder function. First, we show that CG projection is $E(3)$ equivariant:

\begin{restatable}[]{property}{mequivariance}
\label{property:mequivariance}
    Let $f_\proj : \R^{n\times3} \rightarrow \R^{N\times3}$ be the linear transformation $f_\proj(x) := Mx$. $f_\proj$ is $E(3)$ equivariant, i.e., $f_\proj(Q x + g)=Q f_\proj(x) + g$, where $Q$ is a $3 \times 3$ orthogonal matrix, and $g$ is a translational vector.\footnote{Following \citet{satorras2021n}, here $Q x$  and $x + g$ are short-hands for $\{ Q x_1, ..., Q x_n \} $ and $\{ g + x_1, ..., g + x_n \} $ . The same convention also applies to $QX$ and $X+g$ at the coarse level. } 
\end{restatable}

The proof is provided in \cref{app:m-equivariance}. It states that the CG coordinates $X = \proj x$ will always translate, rotate, and reflect in the same way as $x$, obeying $E(3)$ equivariance. For a generic backmapping function $\decoder:\R^{N \times 3} \rightarrow \R^{n \times 3}$, it is natural to require that the backmapped coordinate $\tx$ can also be mapped back to $X$ for self-consistency, i.e. $\proj \decoder(X) = X$. As a corollary, such consistency requirements implies that if $f_M$ is equivariant, Dec should also be equivariant. (see \cref{app:consistency}). Formally, we consider the following requirements in our decoder design to ensure the geometric consistency discussed above:

\reqone{}. $\proj \tx = \proj \decoder(X) = X.$\\
\reqtwo{}. $\decoder(Q X + g)$ =  $Q \decoder(X) + g.$

To the best of our knowledge, these geometric requirements have not been considered in previous works. In \cref{section:decoder}, we introduce how our approach satisfies these requirements. 

\begin{figure*}[t]
\centering
    \includegraphics[width=\textwidth]{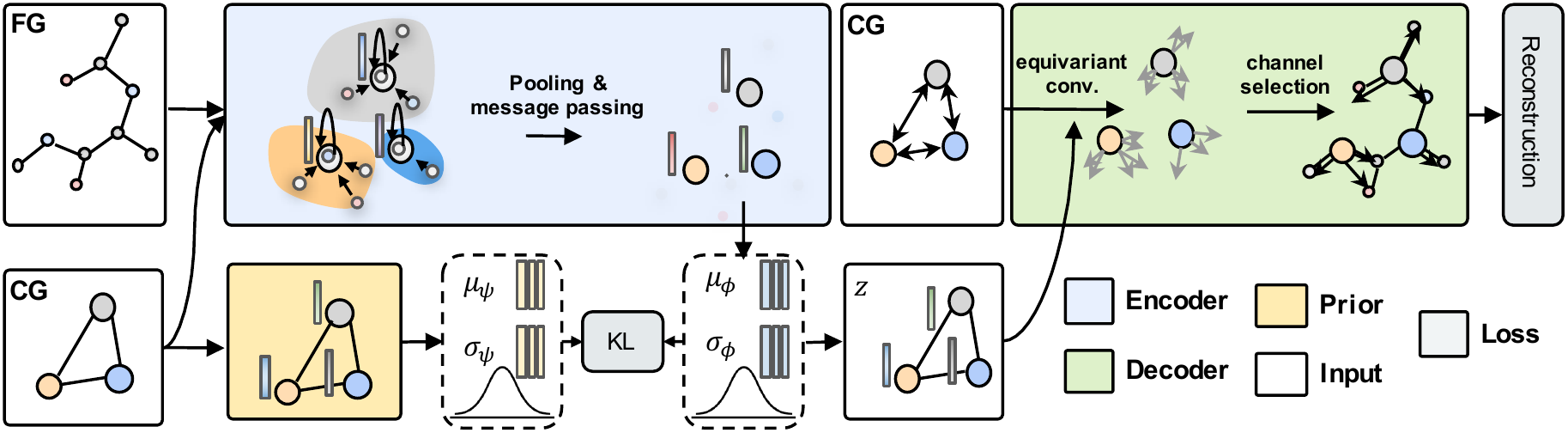}
    \vspace{-10pt}
    \caption{The overall framework of generative CG modeling. The encoder $q_\phi(z| x, X)$ takes both FG and CG structures as inputs, and outputs invariant latent distributions for each CG node via message passing and pooling. Given sampled latent variables and the CG structure, the decoder $p_\theta(x |X,z)$ learns to recover the FG structure through equivariant convolutions. 
    The whole model can be learned end-to-end by optimizing the KL divergence of latent distributions and reconstruction error of generated FG structures.
    }
    \label{fig:schematic}
    \vspace{-5pt}
\end{figure*}

\section{Method}
\secvspace
\label{section:method}
In this section, we introduce our probabilistic framework of generating fine-grained (FG) coordinates from coarse-grained (CG) coordinates. We first present an overview of the conditional generation formalism in \cref{section:probalistic_formulation} and describe the detailed model architecture in \cref{section:encoder,section:decoder}. Then we show the training and sampling protocol in \cref{section:training-sampling}. A notation table is available in \cref{table:symbol_notation} in the appendix.

\subsection{Generative Coarse-Graining Framework}
\label{section:probalistic_formulation}
\subsecvspace

The CG  procedure defines a surjective mapping from the FG coordinates $x$ to the CG coordinates $X$, and we study the inverse problem of recovering $x$ from $X$. This one-to-many problem is a generation task since the same CG system can backmap to different FG conformations. We propose to learn a parameterized conditional generative model $p_\theta(x|X)$ to approximate the recovering function. We further incorporate the uncertainty of FG coordinates into a continuous latent space $z$ and factorize the conditional distribution as an integral over $z$, i.e., $p(x | X) = \int p_\theta(x |X,z) p_\psi(z | X) dz$. This integral is known to be intractable~\citep{kingma2013auto}, and we instead maximize its variational lower bound with an approximate amortized posterior distribution $q_\phi(z|X,x)$~\citep{sohn2015learning}:
\begin{equation}
\begin{aligned}
    \log p(x|X) \geq & \underbrace{\E_{q_\phi(z | x, X)} \log p_\theta(x | X, z)}_{\gL_{\text{recon.}}}\\
    & + \underbrace{\E_{q_\phi(z | x, X)} \log \frac{ p_\psi(z|X) }{q_\phi(z| x, X)}}_{\gL_{\text{reg.}}} 
\label{eq:elbo}
\end{aligned}
\end{equation}
where $p_\theta(x|X,z)$ is the decoder model to recover $x$ from $X$ and $z$, and $q_\phi(z|x,X)$ and $p_\psi(z|X)$ are the encoder and prior model respectively to model the uncertainties from CG reduction. The first term $\gL_{\text{recon.}}$ is the expected reconstruction error of generated FG structures, and the second term $\gL_{\text{reg.}}$ can be viewed as a regularization over the latent space. A schematic of the whole framework is provided in \cref{fig:schematic}. In the following sections, we elaborate on how we estimate these distributions with graph neural networks. 

\subsection{CG Encoding and Prior}
\subsecvspace
\label{section:encoder}

\textbf{Overview of the data representation.} The encoding process extracts information from both the FG structure $\graph_{\textnormal{FG}}$ and CG structure $\graph_{\textnormal{CG}}$. $\graph_{\textnormal{FG}}$ represents radius graphs with a distance cutoff $\dcut$. Radius graph is a widely adopted choice that can differentiate conformations with the same molecular graph~\citep{axelrod2020molecular}.$\graph_{\textnormal{CG}}$ are induced graphs based on assignment map $\map$ as introduced in \cref{section:cg_def}. See \cref{app:graph_struct} for detailed descriptions of these graphs in our model. In short, the nodes are labeled with atomic types and the edges are labeled with distances. 

\textbf{Estimating the posterior distribution: $(X, x) \rightarrow z$.} The encoder model extracts CG-level invariant embeddings by three operations at each convolution step: 1) message passing at FG level, 2) pooling operation that maps FG space to CG space, and 3) message passing at CG level. After convolution updates, the obtained CG embeddings are then used to parameterize the latent distribution with CG latent variable $z \in \mathbb{R}^{N \times F}$ where $F$ is the dimension of features. For each operation, we use a SchNet-like architecture \citep{schutt2017schnet}. The first convolution process at FG level can be described using message passing neural networks (MPNN) \citep{gilmer2017neural}: 
\begin{equation}
\small
    h_i^{t+1} = \update\Big(\sum_{j \in N(i)} \msg(h_i^{t}, h_j^{t}, \rbf(d_{ij}) ), \; h_i^{t}\Big),
    \label{eq:fg2fg}
\end{equation}
where $N(i)$ denotes the neighbors of $i$ in $\graph_{\textnormal{FG}}$ and  $d_{ij}$ denotes the distance between atoms $i$ and $j$. $\rbf(\cdot)$ is the radial basis transformation introduced in \citet{klicpera2020directional, schutt2021equivariant}, which transforms the distances to high dimensional features. The initial node embeddings $h_i^0$ are parameterized from atomic types. Since both atomic types and edge distances are invariant features, the produced FG-wise features will also be invariant. The FG embeddings are then pooled to CG level based on the CG map $\map$: 
\begin{equation}
\small
    \widetilde{H}_{I}^t = \update\Big(\sum_{i \in C_{I}}\msg \left(h_i^{t+1}, H_I^{t}, \rbf(d_{iI}) \right) , \; H_I^{t} \Big),
    \label{eq:fg2cg}
\end{equation}
where $\widetilde{H}_{I}^t \in \R^F$ is the pooled CG embeddings from the FG graph and $d_{iI}$ is the distance between $x_i$ and $X_{I}$, i.e., $d_{iI} = \|x_i - X_{I}\|_2$; $H_I^0$ is initialized as $0_{F}$. This step pools messages from FG nodes to their assigned CG nodes, followed by an updating procedure. The CG-level node features are further parameterized with message passing at the CG level:
\begin{equation}
\small
   H_I^{t+1} = \update\Big(\sum_{J \in N(I)} \msg(\widetilde{H}_{I}^t, \widetilde{H}_{J}^t, \rbf(d_{IJ})), \;  H_{I}^t \Big),
   \label{eq:cg2cg}
\end{equation}
where $N(I)$ denotes the neighbors of $I$ in $\graph_{\textnormal{CG}}$, and $d_{IJ}$ denotes the distance between CG beads $I$ and $J$. 

The aforementioned three steps constitute the convolutional module for extracting both CG and FG level information, with their detailed design described in \cref{appendix:encoder_prior}.  After several steps of convolutional updates in our encoder, we can obtain the final embeddings $\enc(x,X) = \encinvout_I \in \R^F$ for each CG atom $I$. Then two feedforward networks $\mu_\phi$ and $\sigma_\phi$ are applied to parameterize the posterior distributions as diagonal multivariate Gaussians $\mathcal{N}(z_I |\mu_\phi(\encinvout_I), \sigma_\phi(\encinvout_I) )$. Note that, $\encinvout$ are designed as invariant features and thus the resulting likelihood is also invariant. 

\textbf{Estimating the prior distribution: $X \rightarrow z$.} The prior model $\prior(X)$ only takes the CG structure $\graph_{\textnormal{CG}}$ as inputs to model the latent space. The initial CG node features are set as the pooled sum of one-hot FG fingerprints $H^0_I = \sum_{i \in C_{I}} h_i^0$. Then we employ the same networks as defined in \cref{eq:cg2cg} to conduct graph convolutions over $\graph_{CG}$ and update the node embeddings.  After obtaining the final CG embeddings $\prior(X) = \priorinvout$, we also model $p_\psi(z| X)$ as the products of CG-wise normal distributions $\mathcal{N}( z_I | \mu_\psi(\priorinvout_I), \sigma_\psi(\priorinvout_I)$ with means and diagonal covariances parameterized from $\priorinvout_I$ using two separate neural networks $\mu_\psi$ and $\sigma_\psi$.
 
\subsection{Multi-channel Equivariant Decoding}
\subsecvspace
\label{section:decoder}

Our decoder design is inspired by recent advances in vector-based graph neural nets ~\citep{schutt2021equivariant, satorras2021n, jing2020learning, batzner2021se}. On the high level, the decoder $p_\theta(\tx |X, z)$ maps the coarse geometry $X \in \R^{N\times3}$ and invariant latent variables $z \in \R^{N \times F} $ to produce $\tx \in \R^{n \times 3 }$ as predictions of FG geometry $x$. Our proposed decoder consists of three steps 1) Given $X$ and invariant feature $z$ from the encoder, we will generate $F$ equivariant vector channel for each bead, resulting in $V^{\theta}\in \R^{N \times F \times 3}$. 2) As each bead corresponds to different number of atoms, not all channels will be used. From vector channels $V^{\theta}$, we select a proper number of channels for each bead to form $\dtx \in \R^{n \times 3}$ which is the prediction for the relative position $\Delta x$. 3) In the last step, we go from relative position $\dtx$ to absolute $\tx$. This step has to satisfy \textbf{R1}. We elaborate on these three steps next.

\textbf{Equivariant convolution: $(X, z) \rightarrow V^{\theta}$.} 
We use equivariant convolutions based on inter-node vectors to predict $\dtx$.  Apart from invariant node and edge features, our decoder takes a set of edge-wise equivariant vector features $ \{ \widehat{E}_{IJ}= \frac{X_J - X_I}{d_{IJ}} \mid (I, J) \in \edge_{\textnormal{CG}} \}$ as inputs, which form a basis set to construct $\Delta \tx$.  Additionally, our model also include pseudoscalars/pseudovectors which contain additional information of chirality. Unlike scalars/vectors, they undergo a sign flip under reflection transformation (see \cref{app:pseudo}). The message passing operations in the decoder update node-wise scalar features $H \in \R^F$, pseudoscalar features $\widebar{H} \in \R^F$, vector features $V \in \R^{F \times 3} $ , and pseudovector features $\widebar{V} \in \R^{F \times 3}$ separately. The convolutional updates in our decoder are performed as follows: 
\begin{equation}
\small
\begin{aligned}
\label{eq:dec}
    \Delta H_I^t = \sum_{J \in N(i)}& W_1 \circ H^t_J \\
    \Delta \widebar{H}_I^t = \sum_{J \in N(i)}& V^t_I \cdot \widebar{V}^t_J \\
        \Delta V_I^t = \sum_{J \in N(i)}& \Big( W_2 \circ (V^t_I \times \widebar{V}^t_J) + W_3 \circ \widebar{H}^t_I \circ \widebar{V}^t_J  \\
    &+ W_4 \widehat{E}_{IJ} + W_5 \circ V^t_J \Big)\\
        \Delta \widebar{V}_I^t = \sum_{J \in N(i)} & \Big( W_6 \circ (V^t_I \times V^t_J) + W_7 \circ (\widebar{V}^t_I \times \widebar{V}^t_J) \\ 
    & + W_8 \circ \widebar{V}^t_J + W_9 \circ  \widebar{H}^t \circ V^t_J \Big)
\end{aligned}
\end{equation}
where $\times$, $\circ$ and $\cdot$ are cross product, element-wise product and dot product operations respectively. $W_1$-$W_9$ are parameterized invariant edge-wise filters. Let $L_1: \R^F \rightarrow \R^F $ and $ L_2 : \R^K \rightarrow \R^F $ be neural networks, then each filter is implemented as as $W  \in \R^{F} = L_1(\rbf(d_{IJ})) \circ L_2(H_J)$. The message passing operations are followed by an equivariant update block $\eupdate(\cdot, \cdot)$ as proposed in PaiNN\citep{schutt2021equivariant}. It is used as an update block that linearly mixes vector channels and introduces non-linear coupling between $H$ and $V$. For pseudoscalar and pseudovector $\widebar{H}$ and $\widebar{V}$, we simply apply a residual update to ensure that they flip sign under reflection:
\begin{equation}
\small
\begin{aligned}
H_I^{t+1}, V_I^{t+1} &= \sigma(H_I^t + \Delta H_I^t, V_I^t + \Delta V_I^t) \\
\widebar{H}_I^{t+1}, \widebar{V}_I^{t+1} &= \widebar{H}_I^t + \Delta \widebar{H}_I^t, \widebar{V}_I^t + \Delta \widebar{V}_I^t
\end{aligned}
\label{eq:update}
\end{equation}
Pseudoscalars and pseudovectors in our decoder come from cross product updates. This introduces a richer basis set for the coordinate construction especially for low $N$ cases. When $N=3$, the span of the vector basis will be constrained in a plane, and cross product can overcome this limitation by introducing a vector basis in the orthogonal directions, increasing the expressiveness of the model. 

The initial $H_I^0$ is obtained from $z \sim q_{\phi}(z|X,x)$ for training, or $p_{\psi}(z|X)$ for sampling. We initialize other feature set with zero vectors, i.e., $\widebar{H}_I^0=0_{F}$, $V_I^0 = 0_{F \times 3}$, and $\widebar{V}_I^0 = 0_{F \times 3}$. For 3-bead CG geometries which are not chiral, we initialize $\widebar{H}_I^0=1_{F}$ to break the symmetry (see \cref{app:pseudo_init} for more discussions). In this way, information in the pseudovector channel can be propagated to vector channels to construct a basis set that spans 3D. 

After several layers of convolution updates, the decoder produces the final vector output $\decequiout \in \mathbb{R}^{N \times F \times 3}$. It is easy to show that $\decequiout$ transforms in an $E(3)$ equivariant way because it is updated strictly with vector operations, which satisfies \reqtwo in \cref{sec:consistency}. More discussions on the equivariance properties of the decoder are available at \cref{appendix:decoder} and \citet{schutt2021equivariant}.

\textbf{Channel selection: $V^{\theta} \rightarrow \dtx$.} In this part we introduce how to obtain the relative position predictions $\Delta \tx \in \R^{n \times 3 }$ from the equivariant vector outputs $\decequiout \in \R^{N \times F \times 3}$. 
This requires a channel selection procedure illustrated in \cref{fig:channel_selection}A.  As each atom $i$ is assigned to $I=\map(i)$, the corresponding prediction of position $\dtx_i$ is chosen to be the vector channel with the index of $i$ in $C_I$. This requires an index retrieval function $\ind(i, C_I)$ given that $C_I$ is ordered.\footnote{$\ind(i, C_I)$ retrieves the index of $i \in C_I$. For example, $\ind(1, (1,2,4)) = 0, \ind(4, (1,2,4)) = 2 $} The prediction of $\dtx$ does not utilize all the $F > |C_I|$ vector channels at the final vector output $\decequiout$, while all the channels participate in the convolutions before the read-out. We select a proper number of vector channels from $V^{\theta}$ and concatenate them to form the FG level prediction $\dtx$. More precisely,  $\dtx = \bigoplus_i \decequiout_{\map(i), \ind(i, C_I)}$ where $m(i)$ selects CG beads and $\textnormal{Index}(i, C_I)$ selects vector channel of bead $I$. This operation can be done conveniently in Pytorch \citep{pytorch}. See implementation details in  \cref{appendix:channel_select}.

\begin{figure}
\centering
\includegraphics[width=\linewidth]{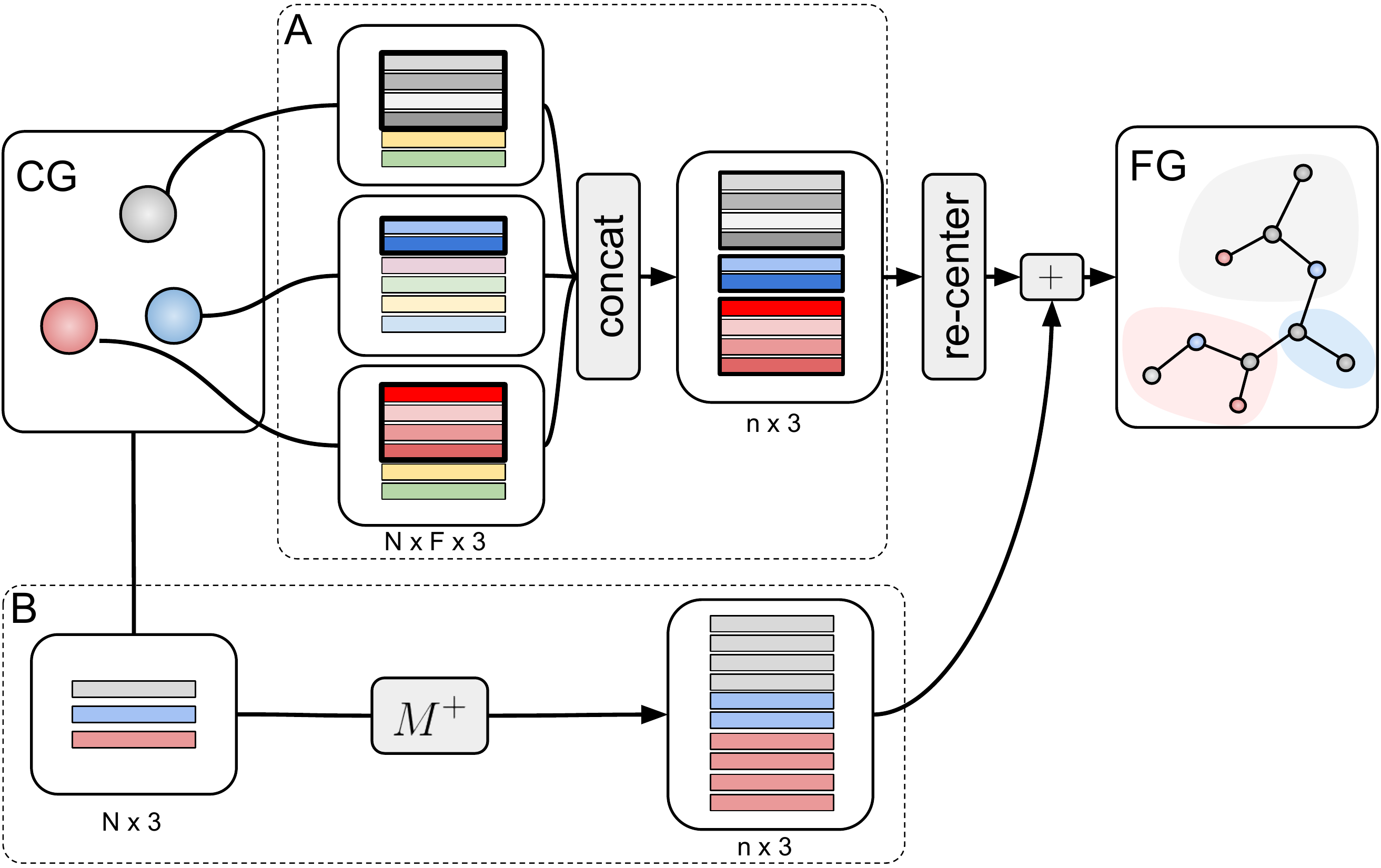}
\inlinefigbottomvspace 
\caption{A schematic diagram to describe how coordinates are reconstructed from vector channels. \textbf{A.} The prediction of $\dtx_i$ are compiled from selected vector channels of CG node $\map(i)$ based on their index in set $C_I$; this is followed by the concatenation of selected vectors to compile $\dtx = \bigoplus_i \decequiout_{\map(i), \ind(i, C_I)}$. The obtained coordinates are further re-centered by subtracting $\proj^+\proj \dtx$ so that $\reqtwo$ is satisfied. \textbf{B.} The CG coordinates are ``lifted" to the FG space via the lifting operator $\proj^+$ to get $\proj^+ X \in \R^{n \times 3}$ which are used as the geometric reference for constructions.}
\label{fig:channel_selection}
\end{figure}

%

\textbf{Compile predictions for FG coordinates: $\dtx \rightarrow \tx$.}   As $\dtx$ is the relative position, we need one more step to go from $\dtx$ to final FG coordinates $\tx$. This is done by setting $\tx:=\lift X + \dtx - \lift \proj \dtx$.  $\lift \in \R^{n \times N}$ is the lift operator with $\proj^+_{i,I} = 1$ if $I = m(i)$ and 0 otherwise (\cref{app:lift}). Similar constructions can also be found in  graph-coarsening methods \citep{loukas2019graph, cai2020graph}. As described by \cref{fig:channel_selection}B, $\proj^+$ maps the point sets space from $\R^{N \times 3}$ to $\R^{n \times 3}$ by assigning or ``lifting'' CG coordinates back to their contributing FG atoms in $C_I$. The term $-\lift \proj \dtx$ is added to re-center $\dtx$ so that we can get the original $X$ back after CG projection in order to satisfy \reqone. See  \cref{appendix:decoder_compat} a formal statement with proof.

\subsection{Model Training and Sampling }
\subsecvspace
\label{section:training-sampling}

\textbf{Reconstruction loss calculation.} We calculate the reconstruction loss introduced in \cref{eq:elbo} as two invariant components: $\gL_{\textnormal{MSD}}$ and $\gL_{\textnormal{graph}}$. The mean square distance loss $\gL_{\textnormal{MSD}} = \frac{1}{n}\sum_{i=1}^n \|\widetilde{x}_i - x_i\|_2^2$ is a standard measurement to quantify differences between the generated reference coordinates~\citep{wang2019coarse,xu2021end}.  To supervise the model to produce valid distance geometries, we introduce another loss term $\gL_{\textnormal{graph}}$. Let $\edge$ denote the set of chemical bonds in the generated molecular graph. Following previous works~\citep{xu2021learning}, we also expand the set with some auxiliary multi-hop edges to fix rotatable bonds.
We also include all 2-hop neighbor pairs in $\edge$. Then the objective can be calculated as $\gL_{\textnormal{graph}} = \frac{1}{|\edge|} \sum_{(i, j) \in \edge} (d (\tx_i,\tx_j) - d(x_i, x_j))^2$, which can be viewed as a supervision over the bond lengths. The total reconstruction loss is computed as
$\gL_{\textnormal{recon.}} =  \gL_{\textnormal{MSD}} + \gamma \gL_{\textnormal{graph}}$, where $\gamma$ is a weighting hyperparameter to balance the two loss contributions.

\textbf{Training and sampling.} We adopt the training objective $\gL = \gL_{\text{recon.}} + \beta \gL_{\text{reg.}}$, where $\beta$ is a hyperparameter to control the regularization strength~\citep{higgins2016beta}. During training, the CG latent variable $z$ is sampled from $q_{\phi}(z| x, X)$ by $z = \mu_\phi + \sigma_\phi \circ \epsilon$, where $\epsilon \sim \gN (0,I)$. Then both $z$ and $X$ are taken as inputs of the decoding model to generate $\widetilde{x}$. For sampling, given the coarse structure $X$, we first sample the invariant latent variable from the prior model by $z \sim p_\psi(z|X)$, and then pass the sampled latent representation $z$ and the coarse coordinates $X$ into the decoder to generate FG coordinates $\widetilde{x} = \dec(X,z)$. 

\section{Experiments} 
\secvspace

\begin{figure*}[!tbp]
  \centering
  \vspace{-10pt}
  \subfloat{\includegraphics[width=0.45\textwidth]{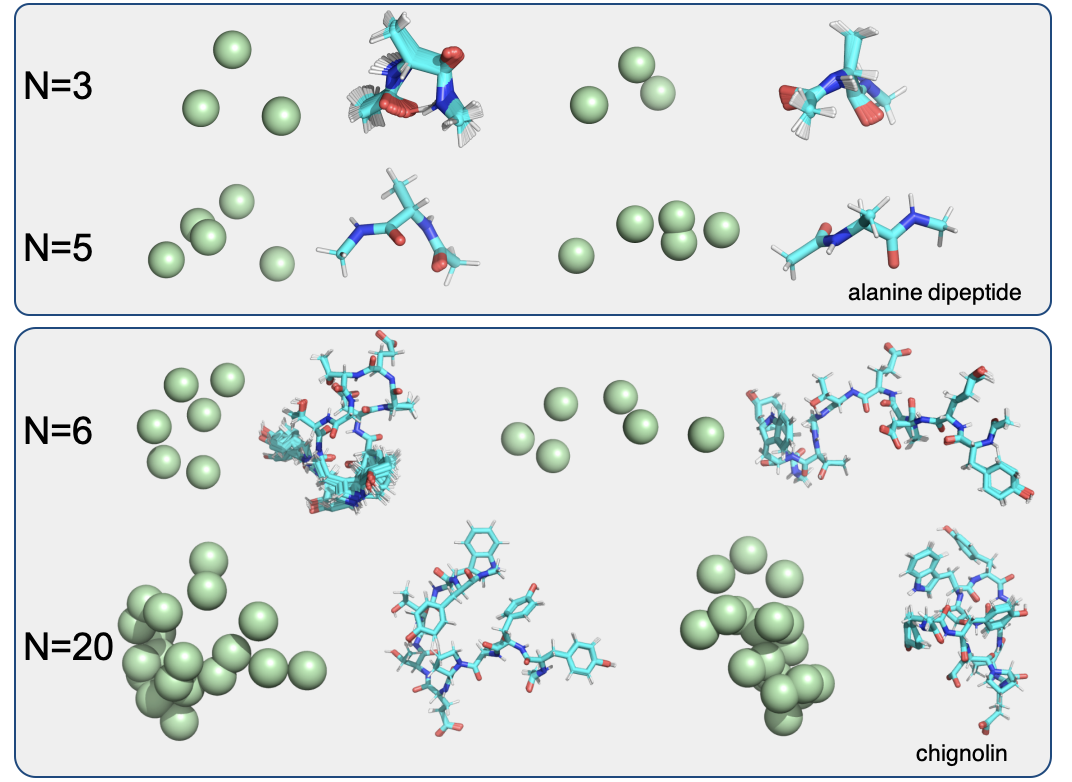}\label{fig:quant_results_f1}}
  \hfill
  \subfloat{\includegraphics[width=0.55\textwidth]{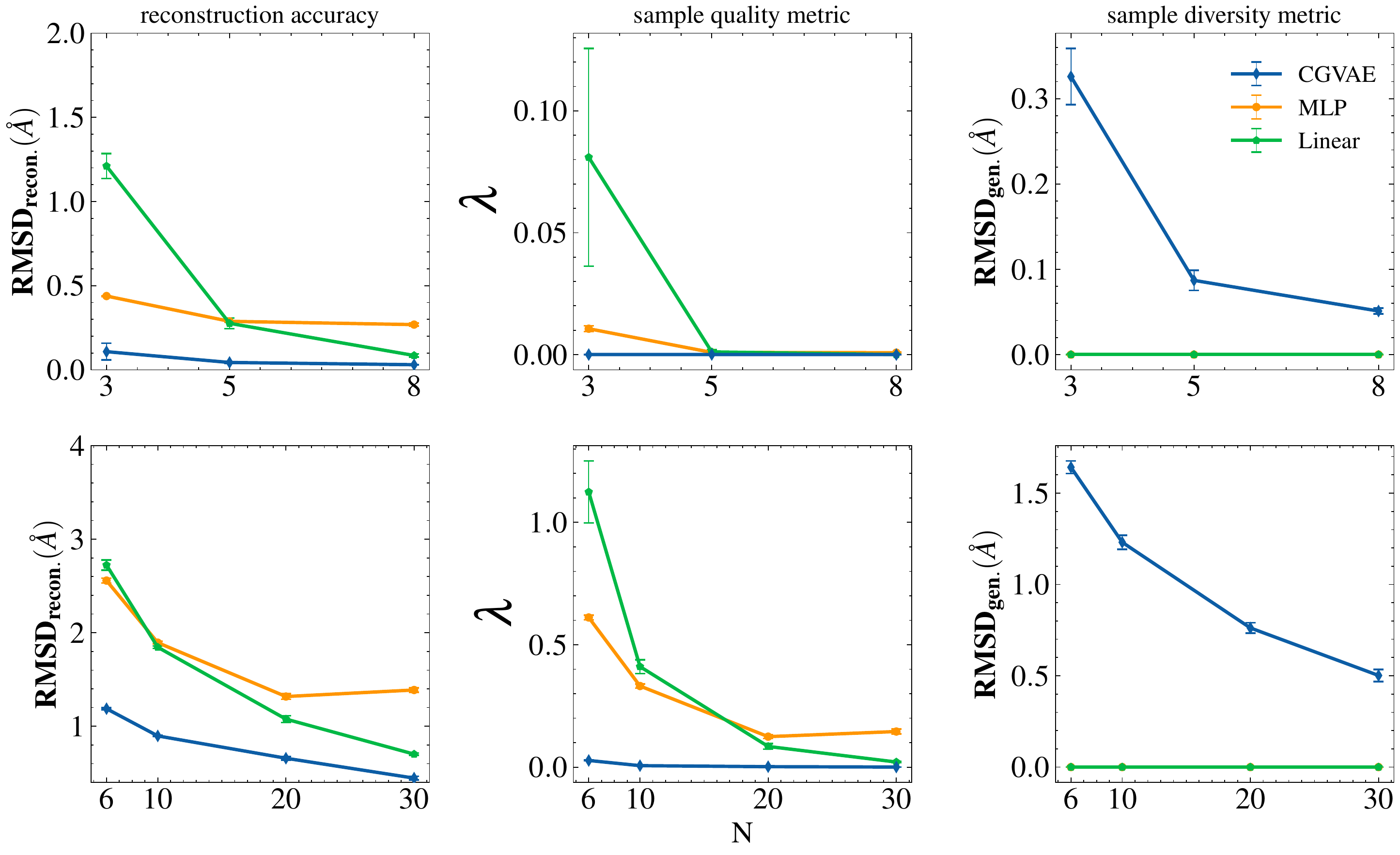}\label{fig:quant_results_f2}}
  \vspace{-2pt}
  \caption{\textbf{Left}: Examples of CG and generated FG geometries of alanine dipeptide (top) and chignolin (bottom) generated by CGVAE. The geometry visualizations are created by PyMol~\citep{pymol}. \textbf{Right}: Benchmarks of reconstructed and generated geometries for heavy-atom structures with different resolution for alanine dipeptide (top) and chignolin (bottom). 
  }
  \label{fig:quant_results}
\end{figure*}

In this section, we introduce our experimental settings and discuss both the quantitative and qualitative results. We first introduce the benchmark datasets and comprehensive metrics in \cref{subsec:exp:benchmark} and \cref{subsec:exp:metric}, and then visualize and analyze the results in \cref{subsec:exp:results}.

\subsection{Benchmarks}
\label{subsec:exp:benchmark}
\subsecvspace
\textbf{Datasets.} We evaluate our models on 2 datasets that have been used as benchmarks for data-driven CG modeling ({\citep{wang2019coarse, wang2019machine, husic2020coarse}}): MD trajectories of alanine dipeptide and chignolin. The details of the simulation protocol can be found in \citet{husic2020coarse}. Alanine dipeptide is a classical benchmark system for developing enhanced sampling methods for molecular conformation, containing 22 atoms. Chignolin is a 10-amino acid mini-protein with 175 atoms and features a $\beta$-hairpin turn in its folded state. It is a more challenging structure to evaluate our model's capacity of decoding complex conformations.
For alanine dipeptide and chignolin, we randomly select 20,000 and 10,000 conformations respectively from the data trajectories and perform 5-fold cross-validation to compute mean and standard errors of evaluation metrics.

\textbf{Baselines.} 
We compare our model to two baseline models proposed in recent works. The first method is a learnable \textbf{linear} backmap matrix \citep{wang2019coarse}. The method itself is $O(3)$ equivariant, and we fix the translation freedom by re-centering the geometry to its geometric center for each sample. This method is a data-driven and equivariant extension of the approach described in \citet{wassenaar2014going}, which uses linear and deterministic (not learnable) projection followed by a random displacement. The second baseline employs Multi-Layer Perceptrons (\textbf{MLP})~\citep{an2020machine} to learn the projection. The model transforms the CG coordinates input and the FG coordinates output as 1D flattened arrays, shaped $3N$ and $3n$ respectively. The method uses neural networks to learn the mapping of the flattened vectors, which is not equivariant. Note that, both baselines lack the probabilistic formulation and are deterministic, which suffer from poor sampling diversity.


\textbf{CG mapping generations.} Our method is compatible with arbitrary mapping protocols. For training our CGVAE and baseline models, in addition to the FG structures provided in the datasets, we also need a CG mapping to produce the CG coordinates. In this paper, we use AutoGrain proposed in \cite{wang2019coarse}, which learns the CG assignment via the objective function that penalizes distant FG atoms being assigned to the same CG atoms. In practice, we first train the AutoGrain model on each dataset, and then take the learned assignment to compute the CG coordinates $X$ from FG coordinates $x$. More details about CG mapping generation can be found in \cref{appendix:map_protocl}.

\begin{figure*}[!thp]
\centering
  \centering
  \vspace{-10pt}
  \subfloat{\includegraphics[width=0.43\textwidth]{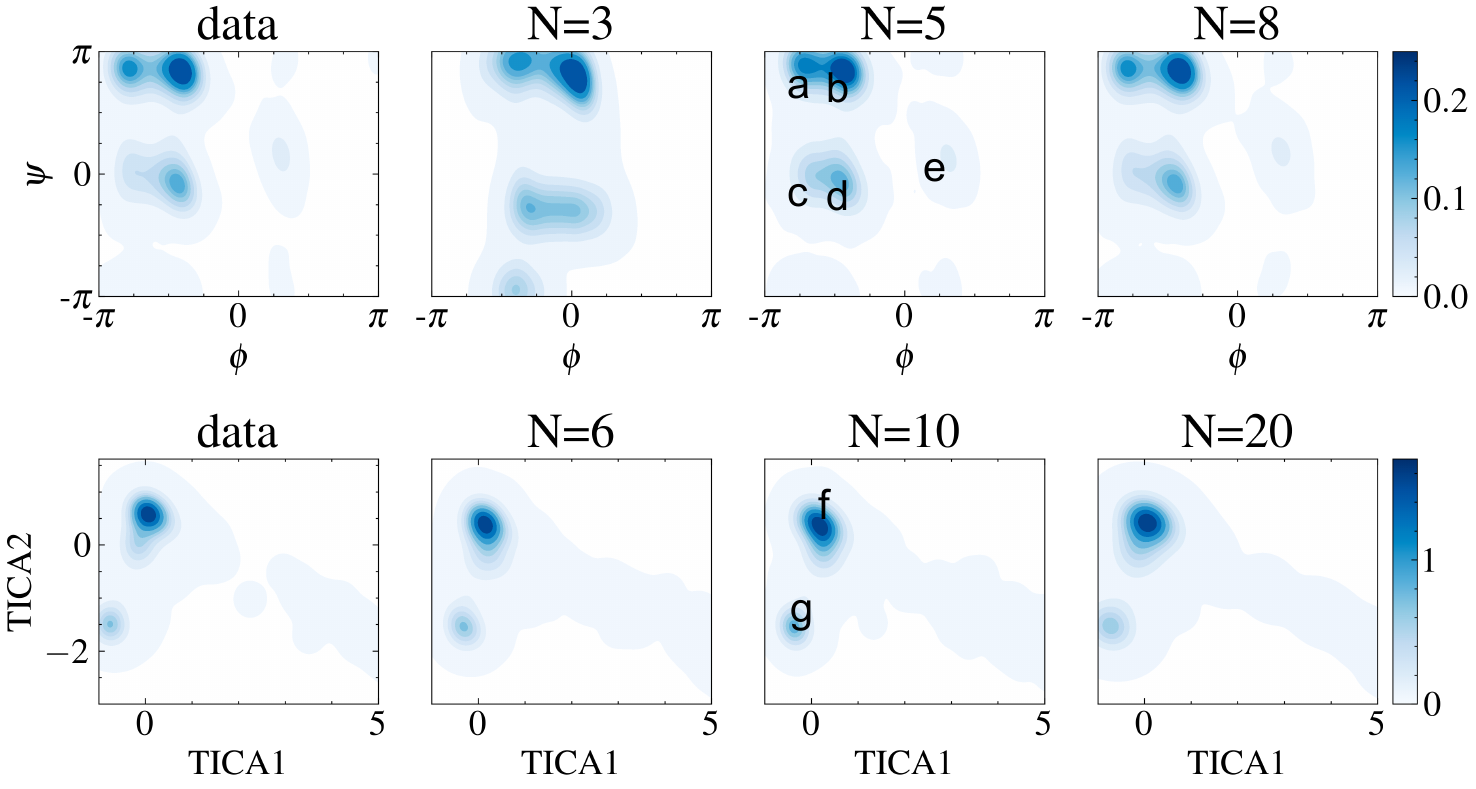}\label{fig:f1}}
  \hfill
  \subfloat{\includegraphics[width=0.57\textwidth]{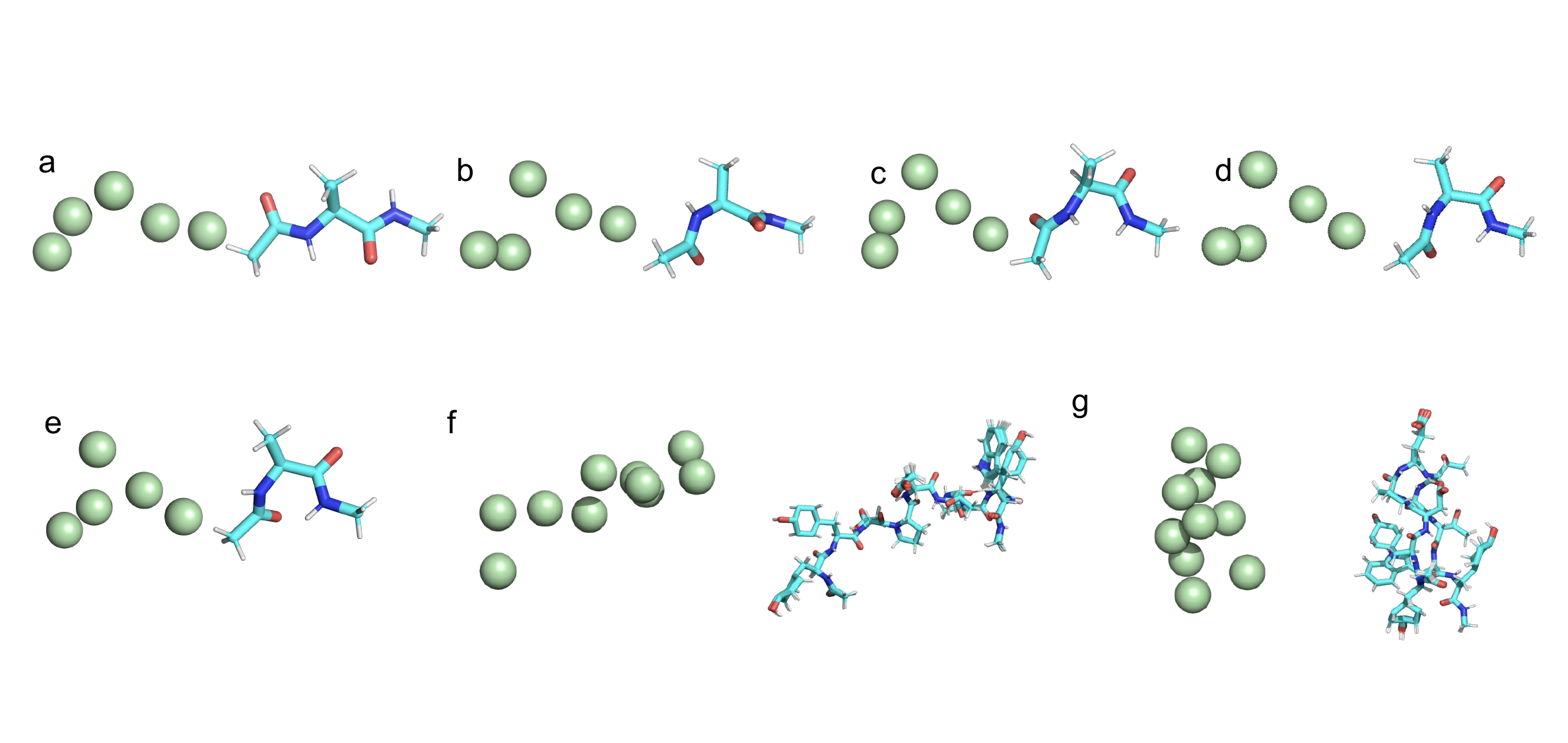}\label{fig:f2}}
  \vspace{-2pt}
\caption{ 
\textbf{Left:} 2D Ramachandran plots (top) and TICA plots (bottom) for conformations of alanine dipeptide and chignolin respectively. The structures for the plots are references in the dataset (first column) and samples generated by CGVAE trained with different $N$ (the rest three columns). Figures show that the generated samples recover the important meta-stable states and the distributions agree well with the ground truth. It also shows that the higher resolution model shows better resemblance with the ground truth data. \textbf{Right:} Visualization of representative samples in the 2D plots.
}
\label{fig:chig_tica}
\end{figure*}

\subsection{Metrics}
\label{subsec:exp:metric}
\subsecvspace
Ideally, the generated FG samples should match the original FG molecular structures (quality) and consist of novel conformations (diversity). We propose  evaluation metrics for generation and reconstruction tasks that measure model fitting capacity, sample quality, and sample diversity at different CG resolutions $N$. More detailed descriptions of the proposed evaluation metrics are in \cref{appendix:eval_metrics}. We evaluate the reconstructed and sampled 3D geometries for both all-atom and heavy-atom (excluding hydrogen atoms) structures because heavy-atom encodes more important structural dynamics.

\textbf{Reconstruction accuracy.} 
The reconstruction task evaluates the model's capacity to encode and reconstruct coordinates. Given the reference and reconstructed FG structures, we report their root mean squared distance ($\textnormal{RMSD}_{\textnormal{recon}}$).  A lower $\textnormal{RMSD}_{\textnormal{recon}}$ indicates the more powerful learning capacity of the model for reconstructing FG geometries. 

\textbf{Sample quality.} We evaluate the sample quality by measuring how well the generated geometries preserve the original chemical bond graph.  Specifically, we report the graph edit distance ratio $\lambda(\graph_{\textnormal{gen}}, \graph_{\textnormal{mol}})$, which measures the relative difference between the connectivity of the generated geometries $\graph_{\textnormal{gen}}$ and the ground truth bond graph $\graph_{\textnormal{mol}}$.
For each generated conformation, $\graph_{\textnormal{gen}}$ is deduced from the coordinates by connecting bonds between pair-wise atoms where the distances are within a threshold defined by atomic covalent radius cutoff (see \cref{app:graph_struct}). The lower $\lambda(\graph_{\textnormal{gen}}, \graph_{\textnormal{mol}})$ is, the better the $\graph_{\textnormal{gen}}$ resembles $\graph_{\textnormal{mol}}$, with $\lambda = 0$ indicating $\graph_{\textnormal{mol}}$ and $\graph_{\textnormal{gen}}$ are isomorphic. 

\textbf{Sample diversity.} This metric measures the model capacity to explore the large conformation space. To evaluate sample diversity, we repeatedly sample multiple $z \sim P(z|X)$ and generate an ensemble of geometries using $\dec(z, X)$. We compare chemically valid generated geometries with the reference FG geometries using $\textnormal{RMSD}$, and term this metric $\rmsdgen$. Similar ideas have been adopted in related conformation generation literature~\citep{mansimov2019molecular}. A higher value of $\rmsdgen$ indicates the better diversity of the geometries generated by the model. Note that, since both baselines are deterministic without the capacity to model sampling diversity, their $\rmsdgen$ are zeros.

\subsection{Results and Analysis}
\label{subsec:exp:results}
\subsecvspace

\textbf{Quantitative results.} We test our CGVAE models against baseline methods on both datasets. A summary of all the metrics regarding the heavy-atom geometries is reported in \cref{fig:quant_results}. Results for all-atom geometries are left in the \cref{app:tables}. As shown in \cref{fig:quant_results}, CGVAE models can consistently achieve better reconstruction accuracy, sample quality, and diversity than linear and MLP models among all resolutions. The linear model ensures equivariance but lacks the expressive capacity to encode complex conformations. The MLP model enjoys higher expressiveness, but the lack of equivariance limits its generalization to different roto-translational states. Combining equivariance and MPNNs, CGVAEs can generate FG samples with high quality and diversity. 
Moreover, we observe that higher-resolution CG representations lead to better reconstruction accuracy, as channels for each CG node are responsible for fewer atoms. Besides, the higher $\rmsdgen$ results of the lower resolution models indicate that they can generate more diverse samples because the distribution of $z$ encodes a larger conformational space. Surprisingly we notice that the complex geometries can be well represented with a very small set of CG atoms. Our model works well even for the extremely coarse order reduction of 22-to-3 and 175-to-6 for alanine dipeptide and chignolin respectively. Such extent of CG reduction means potential acceleration of all-atom simulations by one or two orders of magnitude if a $\mathcal{O}(n \log n)$ algorithm such as Particle Mesh Ewald (PME) is used \citep{darden1993particle}.       

\textbf{Analyzing generated conformations.} To gain a better qualitative understanding of the generated samples, we project the generated samples onto lower-dimensional reaction coordinates for visualization. The visualizations are shown in \cref{fig:chig_tica}. For alanine dipeptide samples, we visualize the histogram of the generated samples on $\phi \times \psi$ which are two special central dihedral angles. The resulting histogram is called the Ramachandran plot, which is a classic visualization for analyzing peptide conformations \citep{ramachandran1963stereochemistry}. The distribution of these two dihedral angles is an important observable because the heavy-atom dynamics of the central alanine are completely described by $\phi$ and $\psi$ \citep{feig2008alanine}. As shown in the plot, with only 3 CG atoms, the model can nearly recover all the important meta-stable states, even though the dihedral angles can only be fully described by at least 6 heavy atoms at the center of the backbone. For analysis of generated chignolin samples, we perform Time-structure Independent Components Analysis (TICA), a popular analysis method to project dynamics onto slow degrees of freedom \citep{perez2013identification, scherer_pyemma_2015}. It is a useful analysis to project high-dimensional complex conformation variations onto a 2D space to identify meta-stable states of molecular geometries. To conduct the analysis, we use pair distances for backbone atoms (45 atoms, 990 distances) with a lag time of 10 ns for both the ground-truth data and generated samples. We first parameterize the TICA transformation on the simulation data by PyEmma \citep{scherer_pyemma_2015}, and apply the transformation on the generated geometry sets to produce a plot for comparison. As shown in the figure, our TICA density plot made from generated samples agrees well with the reference data, which demonstrates our model accurately captures the distribution of conformational states.

\section{Conclusion}
\secvspace
In this work, we present the Coarse-Grained Variational Auto-Encoder (CGVAE), a principled probabilistic model to perform stochastic reverse generation to recover all-atom coordinates from coarse-grained molecular coordinates. Built on equivariant operations, our model encodes all-atom information into invariant latent variables which are sampled to generate coordinates equivariantly. Comprehensive experiments demonstrate that CGVAE significantly outperforms existing methods. For future work, we plan to extend our general framework to the more complex condensed phase systems which demand accurate backmapping at both intra-molecular and inter-molecular level. We hope that our efforts further improve the predictive power of CG simulations for accelerated chemical and materials discovery.

\section*{Code Availability}
A reference implementation can be found at \href{url}{https://github.com/wwang2/CoarseGrainingVAE}

\section*{Acknowledgement}
This work is supported by Toyota Research Institute and Google Faculty Award.

\newpage
\bibliographystyle{icml22/icml2022}
\bibliography{ref}

\newpage
\appendix
\onecolumn
\label{appendix}
\begin{table}[!htp]
\small
\caption{Summary of important notations.} 
\begin{center}
{
\begin{tabular}{@{}l|l@{}}
    \hline
    \toprule
    Symbol & Meaning \\
    \midrule
    \hline
    & Preliminaries \\ \hline
    $x\in \mathbb{R}^{n \times 3}$ and $X\in \mathbb{R}^{   N \times 3}$ & FG geometry and CG geometry \\
    $i/I$ & index over atoms/beads in FG/CG geometry \\
    $\proj \in \mathbb{R}^{N \times n}$ & CG projection operator \\
    $\lift\in \mathbb{R}^{n \times N}$ & lift operator. $\lift \proj = \Pi_n$, $\proj \lift  = I_N$. \\
    $Q \in \mathbb{R}^{3 \times 3}$ & 3 $\times $ 3 orthogonal transformation $Qx = \{Q x_1, ... Q x_n\}$\\ 
    $g\in \mathbb{R}^{3}$ & translational vector. $x+g = \{x_1 + g, ..., x_n + g\}$ \\
    $\map$ & assignment map from $[n]$ to $[N]$ \\
    $w_i$ & weights for atom $i$ for CG projection \\
    $C_I$ & $C_I = (k\in[n]|\map(k)=I)$\\ 
    \hline

    & Encoder and prior \\ \hline
    $\enc$  or $\enc$(x,X) & encoder in VAE formulation. In CGVAE, it is instantiated as $q_{\phi}(z|x, X)$ \\
    $\prior$ or $\prior(X)$ & prior in VAE formulation. In CGVAE, it is instantiated as $p_{\psi}(z|X)$. \\ 
    $\msg$ &  message function \\
    $\update$ & update module \\ 
    $\textnormal{RBF} : \R \rightarrow \R^K$ & radial basis function following \citet{schutt2021equivariant, klicpera2020directional} \\
    $h^t_i/H^t_I \in \mathbb{R}^{F}$ & FG/CG invariant embeddings at $t_{th}$ convolution step in $\enc$ or $\prior$ model \\
    $\widetilde{H}_{I}^t$ &  CG node embeddings updates for node $I$ pooled from FG information in \cref{eq:cg2cg}\\
    $\Delta H_I^t, \Delta V_I^t, \Delta \widebar{H}_I^t, \Delta \widebar{V}_I^t$ & residual udpates for scalar, vector, pseudoscalar, and psuedovector at convolution depth $t$ \\
    $N(\cdot) $ & the neighbor function to obtain adjacent node in a graph\\
    $F/K$ & dimension for node/edge features \\ 
    $\mathcal{N}(\cdot | \cdot, \cdot)$ & multivariate gaussian distribution\\
    $z\in \mathbb{R}^{N \times F}$  & latent representation on CG beads \\
    $\epsilon$ & reparameterized Gaussian noise \\
    $\encinvout \in \mathbb{R}^{N \times F }$ & $\encinvout :=\enc(x, X) =p_{\phi}(z|x, X)$ which is the product of CG-wise normal distribution.\\ 
    $\priorinvout \in \mathbb{R}^{N \times F }$ &  $\priorinvout:=\prior(X)=p_{\psi}(z|X)$ which is the product of CG-wise normal distribution.  \\ 
    \hline
    & Decoder \\ \hline
    $W_{1}-W_{9}$ & parameters for equivariant update in \cref{eq:dec}  \\
    $L_1, L_2$ & MLPs used to parameterize $W$ in \cref{eq:dec}. $L_1: \R^F \rightarrow \R^F$, $L_2: \R^K \rightarrow \R^F$ \\
    $\ind(i, C_I)$ & index function. $\ind(i, C_I)$ retrieves the index of $i \in C_I$. 
    For example, $\ind(1, (1,2,4)) = 0$ \\
    $\decoder(\cdot)$ & a generic decoder function : $\R^{N\times 3} \rightarrow \R^{n \times 3}$ \\
    $\dec$ or $\dec(X, z)$  & decoder in VAE formulation. In CGVAE, it is instantiated as $p_{\theta}(x|X, z)$. \\
    $\tx$ & $\mathbb{R}^{n \times 3}$, decoded FG geometry. $\tx=p_{\theta}(x|X, z)$. \\    
    $\dtx$/$\dtxi$ & $\dtx := \tx  - X$. $\dtx_i$ is $i_{th}$ coordinate of $\dtx$. Note that decoder in CGVAE generates $\dtx$. \\    
    $H$, $V$, $\widebar{H}$, $\widebar{V}$ & scalar, vector, pseudoscalar, pseudovector features  \\
    $\decequiout \in \mathbb{R}^{N \times F \times 3}$ & equivariant output of decoder, subsets of its vector channels are used as prediction for $\dtx$ \\     
    $\eupdate(\cdot, \cdot)$ & equivariant update function proposed in \citet{schutt2021equivariant} \\
    \hline
    & Misc \\ \hline
    $\dcut/\Dcut$ & cutoff distance to form a graph on FG/CG geometry \\    
    $\{\graph, \node, \edge\}$ & graphs, nodes, and edges \\
    $d_{ij}/d_{IJ}\in \R$ & inter-node distances in FG/CG level. \\
    $d_{iI} $ & distance between $i_{th}$ FG atom and $I_{th}$ CG atom \\
    $\widehat{E}_{IJ}=\frac{X_J-X_I}{d_{IJ}}$ & unit distance vector between I and J\\

    \hline
    & Experiments \\ \hline
    $\graph_{\textnormal{mol}}$ & molecular graphs based on chemical bond connectivities \\ 
    $\graph_{\textnormal{sample}}$ & deduced molecular graphs of samples generated by CGVAE \\ 
    $\edge$ & edge set that includes 2-hop neighbors in $\graph_{\textnormal{mol}}$ \\
    $\lambda(\cdot, \cdot)$ & graph edit distance ratio to measure sample quality by comparing $\graph_{\textnormal{mol}}$ and $\graph_{\textnormal{sample}}$\\
    $\textnormal{RMSD}_{\textnormal{recon}}$ &measure reconstruction loss; the lower, the better \\
    $\textnormal{RMSD}_{\textnormal{gen}}$ & measure diversity of generated samples; the higher, the better.  \\
    \hline 
    & Operators \\ \hline
    $\circ$ & pointwise product \\
    $[\cdot, \cdot]$ & concatenation\\
    $\times$ & cross product \\
      \bottomrule
\end{tabular}
}
\end{center}
\label{table:symbol_notation}
\end{table}
\section{Symbol Table}
We summarize definitions of important notations in \cref{table:symbol_notation}.

\section{Background and Related Works}

\subsection{MD Background}
\textbf{Molecular dynamics and CG.} 
The dynamics and distribution of $x$ are governed by a Hamiltonian $u(x)$. Under the ergodic hypothesis, the distribution of $x$ converges to the Boltzmann distribution: $p(x) \propto e^{-u(x) \beta} $ where $\beta$ is the inverse temperature of the system heat reservoir. Given $\map$ and $p(x)$, the equilibrium distribution of $X$ can be derived by $\int p(x) p(X|x) \: dx $ where $ p(X|x)$ is the dirac delta function $\delta(X - \proj(x))$ which map $x$ deterministically onto the space of $X$. In statistical mechanics, $-\frac{1}{\beta} \ln p(X)$ is also defined as the Potential of Mean Force (PMF) which is a conservative free energy quantity to describe the equilibrium distributions of coarse-grained variables. In this work we are interested in solving the inverse problem of learning the generative map $p(x | X )$ given some mapping $\proj (x) = X$.

\textbf{CG simulation.} The theory and methodology of parameterizing CG models have been well established over decades~\citep{kirkwood1935statistical, zwanzig2001nonequilibrium, noid2008multiscale}. Systematic parameterization and mapping protocols have been widely applied to simulating large-scale biological molecules \citep{marrink2007martini, menichetti2019drug, souza2021martini}. Lately, there are also a handful of works utilizing machine-learned potentials to simulate CG systems \citep{zhang2018deepcg, wang2019coarse, wang2019machine, husic2020coarse}.

\subsection{More Related Works}

\textbf{Equivariant modeling of point sets.}
Efforts have been made to build equivariant neural networks for different types of data and symmetry groups \citep{zaheer2017deep, qi2017pointnet, maron2018invariant, thomas2018tensor, weiler20183d,cohen2019gauge, worrall2019deep, weiler2019general,cohen2019gauge, fuchs2020se, finzi2020generalizing, maron2020learning, bevilacqua2021equivariant} . In the case of 3D roto-translations for point clouds, tensor field network (TFN) \citep{thomas2018tensor} and its extension with attention mechanism $SE(3)$ transformer \citep{fuchs2020se} have been applied to various domain where modeling roto-translation is crucial \citep{winkels20183d, unke2021se, batzner2021se, cai2021equivariant}. 

\textbf{Generative model for molecular conformations.}
Many generative models have been proposed to generate molecular conformation ensembles conditioned on molecular graphs with modeling techniques like autoregressive models \citep{gebauer2019symmetry}, conditional normalizing flow \citep{xu2021learning}, bilevel programming \citep{xu2021end}, and gradient-based optimizations \citep{shi2021learning}.

\section{Mathematical Details}
\subsection{Lift Operator}
\label{app:lift}
\begin{restatable}[]{definition}{lift_op}
\label{def:lift}
A lift operator $\lift$ of size $n \times N$ that maps the point sets space from $\R^{N \times 3}$ to $\R^{n \times 3}$ satisfies the following property. 
$ 
\lift_{i, I}=\left\{\begin{array}{ll}
1 & \text { if } I = \map\left(i \right) \\
0 & \text { otherwise }
\end{array}\right.$

\end{restatable}
In the case of $w_i =1$ ($w_i$ is the projection weight defined in \cref{section:cg_def}) for all $i$, the $\lift$ is the Moore-Penrose pseudoinverse of $M$ that has the following properties \citep{loukas2019graph, cai2020graph}.

\begin{restatable}[]{property}{composition}
\label{prop:composition}
$\proj \lift = I_N. $ 
\end{restatable}
\begin{proof}
First note that $\lift_{i, I} = \mathrm{I}_{I \in m(i)}$ where $ \mathrm{I}$ is the indicator function. Expanding the definition, we have 
\begin{equation*}
    \sum_{i \in [n]} \proj_{I, i} \lift_{i, J} 
    =\sum_{i\in [n]} \frac{w_i}{\sum_{j \in C_I} w_j} \mathrm{I}_{J \in m(i)}    
    =  \sum_{i\in [n]} \frac{w_i \mathrm{I}_{I \in m(i)}}{\sum_{j \in C_I} w_j} \delta_{IJ} 
    =  \frac{\sum_{i\in C_I} w_i}{\sum_{j \in C_I} w_j} \delta_{IJ} =  \delta_{IJ}
\end{equation*}
\end{proof}

\subsection{A Toy Example}
\label{app: toy-exampe}
Consider a cluster map $\map$ that atoms $\{1, 2, 3, 4\}$ into 2 groups $\{\{1, 2, 3\}, \{4\}\}$. Assuming the mapping is constructed as geometric centers, i.e., the mapping weights are equal for each CG particle, then
$   \proj = 
\begin{bmatrix}
1/3 & 1/3 & 1/3 & 0 \\
0 & 0 & 0 & 1 
\end{bmatrix} 
$
and 
$
    \lift = \begin{bmatrix}
1 & 0 \\
1 & 0 \\
1 & 0 \\
0 & 1 
\end{bmatrix}
$. 
It naturally follows that  $ 
    \proj \lift = \begin{bmatrix}
1 & 0 \\
0 & 1 
\end{bmatrix}  
$ 
and
$ 
    \lift \proj = \Pi_{n} = \begin{bmatrix}
1/3 & 1/3 & 1/3 & 0\\
1/3 & 1/3 & 1/3 & 0\\
1/3 & 1/3 & 1/3 & 0\\
0 & 0 & 0 & 1\\
\end{bmatrix}.  
$ 

 Note that the normalization condition $\sum_i \proj[I, i] = 1$ is essential for $\proj$ to respect translational equivariance because CG needs to preserve the norm of the translation vector $g$.

Note that the construction of CG projections do \emph{not} require that $\proj_{I, i}  \neq  0 \; \textnormal{ } \forall i \in C_{I} = \{k \in [n] | \map(k) = I\} $ because $w_i$ can take arbitrary non-negative values. Intuitively, this means that some atoms do not contribute to the construction of CG coordinates, but they are assigned to a bead via $\map$ in $C_I$. This can sometimes be seen in some coarse-grained simulations in practice: for instance protein simulations are often run with $\alpha$ carbons only. In the case where $w_i =  0$ for some $i$, $\lift$ the operator defined is no longer the pseudo-inverse of $\proj$. For example, if we assign 0 weights to two of the nodes in $\proj$ induced by $\map$ is
$ \proj = \begin{bmatrix}
0 & 1 & 0 & 0 \\
0 & 0 & 0 & 1 
\end{bmatrix}
$.   
Based on \cref{def:lift}, because $\lift$ is constructed from $\map$, $\lift$ remains the same but is not the pseudo-inverse of $\proj$ in this case.

\subsection{Proof of Property \ref{property:mequivariance}}
\label{app:m-equivariance}

\mequivariance*
\begin{proof}
  Without loss of generality, we prove $f_M(Qx+g) = Qf_M(x) + g$ holds for any bead $I$. 
\begin{equation*}
    [f_M(Qx+g)]_I=\sum_i \proj_{I, i} (Q x_i + g) = \sum_i \proj_{I, i} (Q x_i) + \sum_i\proj_{I, i}g = QX_I + \sum_i\proj_{I, i}g = QX_I + g = [Qf_M(x) + g]_I
\end{equation*}
The first equality follows from the definition of $f_M$. The second equality follows from the linearity of $M$. The third equality definition of $Q$ and $M$. The fourth equality follows from the definition of $M$. The last equality follows from the definition of $f_M$.
\end{proof}

\subsection{On the geometric Requirements for Backmapping}
\label{app:consistency}
Given the fact that $\proj$ is $E(3)$ equivariant, we can show that $\decoder$ has to be equivariant for $\reqone$ to be true.

\begin{proposition}
    If $\proj \decoder(X) = X$ and $\proj$ is $E(3)$ equivariant, $\decoder$ is also $E(3)$ equivariant, i.e. \decoder(QX + g) = Q \decoder(X) + g.
\end{proposition}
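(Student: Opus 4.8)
The plan is to reduce the statement to \cref{property:mequivariance} by a one-line intertwining computation, and then to supply the extra input needed to cancel $\proj$. First, instantiate the consistency hypothesis $\proj\decoder(X)=X$ at the transformed coarse input $QX+g$, obtaining $\proj\decoder(QX+g)=QX+g$. Next, apply \cref{property:mequivariance} with $x:=\decoder(X)$ to get $\proj\bigl(Q\decoder(X)+g\bigr)=Q\,\proj\decoder(X)+g=QX+g$. Comparing the two, $\proj\decoder(QX+g)=\proj\bigl(Q\decoder(X)+g\bigr)$, so the two candidate fine-grained structures $\decoder(QX+g)$ and $Q\decoder(X)+g$ have the same image under $\proj$. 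The substantive content of the proposition is thus that $\proj\circ\decoder$ intertwines the $E(3)$-action, together with a cancellation of $\proj$.

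That cancellation is the point I expect to be the main obstacle: $\proj\in\R^{N\times n}$ with $N<n$ is not injective, and indeed one can perturb any decoder satisfying \reqone{} by an arbitrary (possibly non-equivariant) displacement valued in $\ker\proj$ without breaking \reqone{}, so \reqone{} by itself cannot force \reqtwo{}. I would close the gap using the structure of the backmap rather than treating $\decoder$ as a black box: write $\decoder(X)=\lift X+\bigl(\mathbb I_n-\lift\proj\bigr)v(X)$, the recentered form of Step~3 in \cref{section:decoder}, where $\lift$ is the lift operator of \cref{def:lift}. The term $\lift X$ is $E(3)$-equivariant because $\lift$ is linear (so it commutes with $Q$ acting on the coordinate axis) and each of its rows has a single unit entry, giving $\lift\vone_N=\vone_n$ so that translations pass through. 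The correction lies in $\ker\proj$ — since $\proj\lift=I_N$ by \cref{prop:composition} we have $\proj\bigl(\mathbb I_n-\lift\proj\bigr)=0$, which is precisely why \reqone{} survives — and it transforms as $\bigl(\mathbb I_n-\lift\proj\bigr)v(QX+g)=Q\bigl(\mathbb I_n-\lift\proj\bigr)v(X)$ provided $v$ satisfies $v(QX+g)=Qv(X)$; in our decoder $v=\dtx$ is assembled from the equivariant channels $\decequiout$, so this holds. Adding the two pieces gives $\decoder(QX+g)=Q\decoder(X)+g$, i.e. \reqtwo{}.

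Accordingly, I would present the proposition with the mild standing assumption that the decoder's $\ker\proj$-component is produced equivariantly — exactly what our construction enforces — so that the first paragraph supplies ``$\proj\decoder$ is $E(3)$-equivariant'' and the decomposition supplies injectivity on the relevant affine subspace. Read this way, the proposition is the precise justification for pairing \reqtwo{} with \reqone{}: once $\proj$ is $E(3)$-equivariant (\cref{property:mequivariance}), any self-consistent, canonically constructed backmap is forced to be $E(3)$-equivariant too.
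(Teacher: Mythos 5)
Your proposal is correct as far as it goes, and it takes a genuinely different --- and in fact more careful --- route than the paper. The paper argues by contradiction: assuming $\decoder(QX+g)\neq Q\decoder(X)+g$, it applies $\proj$ to both sides and asserts $\proj\decoder(QX+g)\neq \proj\bigl(Q\decoder(X)+g\bigr)$, then derives a contradiction with $\proj\decoder(QX+g)=QX+g$. That first step silently uses injectivity of $\proj$, which fails since $\proj\in\R^{N\times n}$ with $N<n$: two distinct fine-grained structures can share a coarse image. Your forward computation establishes exactly what the paper's argument actually establishes --- that $\proj\decoder(QX+g)$ and $\proj\bigl(Q\decoder(X)+g\bigr)$ coincide, i.e.\ that $\proj\circ\decoder$ intertwines the $E(3)$ action --- and you are right that this alone cannot force $\decoder$ itself to be equivariant, since perturbing the decoder by a non-equivariant displacement valued in $\ker\proj$ preserves \reqone{} while breaking \reqtwo{}. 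So the proposition as literally stated does not follow from the stated hypotheses. Your fix --- decomposing $\decoder(X)=\lift X+(I_n-\lift\proj)\,v(X)$ and assuming the $\ker\proj$ component $v$ is produced equivariantly, which is what the construction in \cref{section:decoder} actually enforces --- supplies the missing hypothesis and yields a sound proof. The paper's version buys brevity at the cost of a logical gap; yours buys an honest account of what \reqone{} does and does not imply, at the cost of an extra (but true of the model) structural assumption, which should be added to the statement of the proposition if your argument is adopted.
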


\begin{proof}
We show $\decoder(Qx + g) = Q \decoder(X)+g$ by proof of contradiction. Assume $\decoder(QX + g) \neq Q\decoder(X) + g$. Given $\proj \decoder(X) = X$ and $\proj$ is $E(3)$ equivariant, we have  
\begin{equation}
    \proj \decoder(QX + g) \neq \proj Q\decoder(X) + Mg = \proj Q\decoder(X) + g = Q \proj \decoder(X) + g = QX + g
\end{equation}
The first inequality follows from the assumption. The second equality follows from the definition of $M$. The third equality follows from the property of $M$ and $Q$. The last equality follows from the property of $\decoder$.
We reaches a contradiction that $\proj \decoder(QX + g) \neq QX + g$. Therefore we conclude that the assumption has to be false and $\decoder(QX + g) = Q \decoder(X)+g$.

\end{proof}

\subsection{ELBO Derivation}

Following the probabilistic graphical model in \cref{fig:proba_graph}, we carry out the standard derivation of the Evidence Lower Bound (ELBO) described in \cref{eq:elbo} for our conditional VAE here \citep{sohn2015learning}: 
\begin{equation*}
\begin{aligned}
    \log p(x|X) &= \log \E_{ q_\phi(z | x, X) } \frac{ p(x| X) }{q_\phi(z | x, X)} \\
    &\geq \E_{ q_\phi(z | x, X) } \log \frac{ p(x|X) }{q_\phi(z | x, X)} \\
    &= \E_{q_\phi(z | x, X)} \log \frac{  p_\theta(x | X, z)p_\psi(z|X)}{q_\phi(z | x, X)} \\
    &= \underbrace{\E_{q_\phi(z | x, X)} \log p_\theta(x | X, z)}_{\gL_{\text{recon.}}} + \underbrace{\E_{q_\phi(z | x, X)} \log \frac{ p_\psi(z|X) }{q_\phi(z| x, X)}}_{\gL_{\text{reg.}}}
\end{aligned}
\end{equation*}

\begin{wrapfigure}{R}{0.2\textwidth}
\centering
\includegraphics[width=\linewidth]{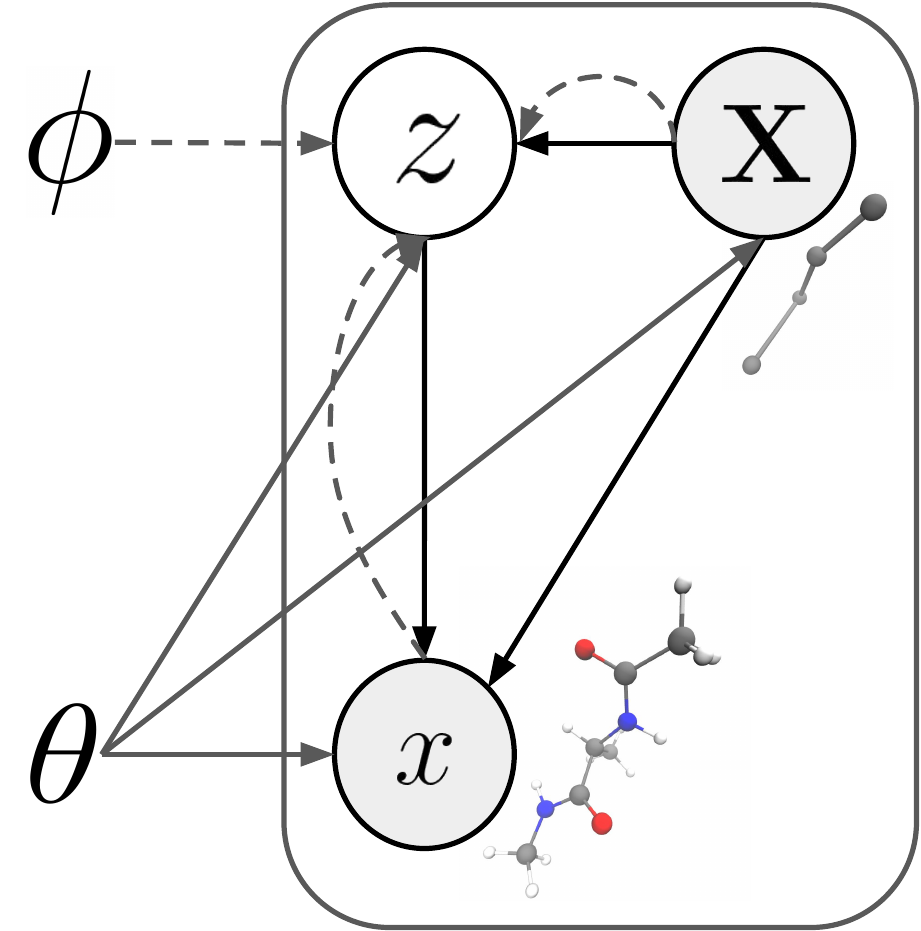}
\caption{The probabilistic graphical model for the generation process (solid line) and inference process (dashed line) of the conditional geometric reverse mapping.}
\label{fig:proba_graph}
\end{wrapfigure}

\section{Graph Representations}
\label{app:graph_struct}
\begin{table}[!htp]\centering
\tiny
\renewcommand{\arraystretch}{1.3}
\caption{Summary of graph data inputs used in CGVAE.}\label{tab:graph_struct}
\begin{tabular}{@{}llll@{}}
\toprule
&$\graph_{\textnormal{FG}}$ &$\graph_{\textnormal{CG}}$ \\\midrule
Coordinate &$x \in \R^{n \times 3}$ &$X \in \R^{N \times 3}$ \\
Node &$\node_{\textnormal{FG}}$ &$\node_{\textnormal{CG}}$ \\
Edge &$\edge_{\textnormal{FG}} = \{(i, j) \mid d(x_i, x_j) < \dcut, \forall \; i,j \in \node_{\textnormal{FG}} \} $ &$\edge_{\textnormal{CG}} = \{ (I, J) \mid (C_{I} \times C_J) \cap \edge_{\textnormal{FG}} \neq \ \emptyset, \forall I, J \in \node_{\textnormal{CG}} \}$ \\
Node feature & invariant features $h \in \R^{n \times F}$  &invariant features $H \in \R^{N \times F}$, equivariant features $V \in \R^{N \times F \times 3 }$ \\
Edge feature &$d_{ij} \in \R$ &$d_{IJ} \in \R$, $\widehat{E}_{IJ} = \frac{X_I - X_J }{d_{IJ}} \in \R^{3}$ \\
\bottomrule
\end{tabular}
\label{tab:grpah}
\end{table}

Here we describe the graph data structure we used for CG and FG graphs for geometric convolutions. The detailed description is summarized in \cref{tab:graph_struct}. 

\textbf{FG 3D graph $\graph_{\textnormal{FG}}$.} The geometric graph at the FG level is represented as an undirected $\graph_{\textnormal{FG}} = (\node_{\textnormal{FG}}, \edge_{\textnormal{FG}})$, where $\node_{\textnormal{FG}} = [n]$ is the node set. The edge set is determined by a predefined distance cutoff $\dcut$ that $\edge_{FG} = \{(i, j) \mid d(x_i, x_j) < \dcut, \forall \; i,j \in \node \}$, where $d(a, b)=\|a - b\|_2$ is the euclidean distance between $a$ and $b$. Following standard featurization procedure in machine learning techniques for molecular graph, the initial node features $h \in \R^{F}$ is simply one-hot encoding of atomic types. Following \cite{schutt2021equivariant}, we parameterize edge features using $\rbf(d_{ij})$ to expand $d_{ij}$ with coefficients of $K$ sine radial basis: $\sin(\frac{n \pi}{\dcut} d_{ij}) / d_{ij}$ with a cosine filter as a continuous cutoff. $n$ ranges from 1 to $K$. Both $K$ and $\dcut$ are hyper-parameters of the model. 

\textbf{CG graph.}
Given $\map$, we introduce a corresponding undirected CG graph $\graph_{\textnormal{CG}}$ with $\node_{CG} = [N]$. There is an edge between node $I$ and $J$ if there exists at least an edge between sub-nodes in $C_{I}$ and $C_J$: $\edge_{\textnormal{CG}} = \{ (I, J) \mid (C_{I} \times C_J) \cap \edge_{\textnormal{FG}} \neq \ \emptyset, \forall I, J \in \node_{\textnormal{CG}} \}$. Here, $C_{I} \times C_J$  indicates the Cartesian product between $C_{I}$ and $C_J$. The initial node feature $H_I$ is parameterized using the pooling operation described in \cref{section:encoder}. The distances for CG point sets are similarly featurized with sine radial basis functions, with a cutoff distance at $\Dcut$. For the equivariant decoding described in \cref{section:decoder}, CG graph also carries a set of equivariant features parameterized from linear combinations of $\widehat{E}_{IJ}$.

\textbf{FG bond graph.}
A molecule can have many conformations (stereo-isomers) which share the same bond graph structure. For classical MD simulations where there are no chemical reactions, the chemical bond graph does not change. The molecular graph $\graph_{\textnormal{mol}} = (\node_{\textnormal{mol}}, \edge_{\textnormal{mol}})$ can be deduced from chemical rules and can be obtained from open-sourced chemoinformatic analysis packages like \texttt{RDKIT} \citep{landrum2013rdkit}. 

\textbf{Determine bond graph from FG geometries.} For evaluation of quality of generated samples, we compute the bond graphs of generated samples and compare them with the ground truth chemical bond graph. The chemical graphs are essentially radius graphs but with pairwise distance cutoffs determined by the atom types of the pairs involved. The distance cutoffs for each pair of atoms are determined from the covalent radii of each atoms. We obtain  covalent radius of each atom from \texttt{RDKIT} using \texttt{rdkit.Chem.PeriodicTable.GetRCovalent}. Let $r_{\textnormal{covalent}}(i) $ indicates the covalent radius of atom $i$, the bond graph of generated geometries can be deduced: $\{ (i,j ) \mid  \|x_i - x_j\|_2 < 1.3 \times (r_{\textnormal{covalent}}(i) + r_{\textnormal{covalent}}(j)  \forall i, j \in \node_{\textnormal{FG}}\})$. The scaling factor of 1.3 is chosen based on the \texttt{xyz2mol} public repository \citep{xyz2mol}.  

\section{Model Details}
\label{appendix:model}
\subsection{Details of Encoder and Prior Networks}
\label{appendix:encoder_prior}

\textbf{Encoder.} We introduce the detailed design of our encoder. Recall that $K$ is the dimension of invariant edge features, and $F$ is the dimension of node embedding. We define $L_{3}, L'_{3}, L''_{3}, L'''_{3}: \mathbb{R}^K \rightarrow \mathbb{R}^F$ and $L_{4}, L'_{4}, L''_{4}, L'''_{4}: \mathbb{R}^F \rightarrow \mathbb{R}^F$ be layer-wise MLPs with swish function as activation. The detailed form of FG MPNN is:
\begin{equation*}
\begin{aligned}
      h_i^{t+1} &= \update\left( \sum_{j \in N(i)} \msg(h_i^{t}, h_j^{t}, \rbf(d_{ij})), \; h_i^{t}\right)  = h_i^{t} + \sum_{j \in N(i)} L_{3}(\rbf(d_{ij})) \circ L_{4}( h_{i}^t \circ h_{j}^t) 
\end{aligned}
\end{equation*}
The detailed form of the pooling operation to map information from FG to CG level is:
\begin{equation*}
\begin{aligned}
    \widetilde{H}_{I}^t &= \update\left( \sum_{i \in C_{I = \map(i)}}\msg(h_i^t,H_{I}^t,  \rbf(d_{iI})) , \; H_I^{t} \right) =  H_{I}^{t} + \sum_{i \in C_{I}}  L'_{3}(\rbf(d_{iI})) \circ L'_{4}( h_i^{t}) 
\end{aligned}
\end{equation*}
Similar to MPNNs at the FG resolution, the CG-level MPNN is:
\begin{equation*}
\begin{aligned}
      H_I^{t+1} &= \update\left( \sum_{J \in N(I)} \msg( \widetilde{H}_{I}^t, \widetilde{H}_{J}^t, \rbf(d_{IJ})), \;  H_{I}^t \right)  =H_{I}^t + \sum_{J \in N(I)} L''_{3}(\rbf(d_{IJ})) \circ L''_{4}( \widetilde{H}_{I}^t \circ\widetilde{H}_{J}^t) 
\end{aligned}
\end{equation*}
This form of MPNN is used for CG information propagation for both $\enc$ and $\prior$. 

\textbf{Prior.} For $\prior$, the detailed depth-wise form of the message passing operation is : 
\begin{equation*}
\begin{aligned}
      H_I^{t+1} &= \update\left( \sum_{J \in N(I)} \msg( H_{I}^t, H_{J}^{t}, \rbf(d_{IJ})), \;  H_{I}^t \right)  =H_{I}^t + \sum_{J \in N(I)} L'''_{3}(\rbf(d_{IJ})) \circ L'''_{4}( H_{I}^t \circ \widetilde{H}_{J}^t ) 
\end{aligned}
\end{equation*}

\subsection{Details of Decoder}
\label{appendix:decoder}
In this subsection, We first give details about \cref{eq:dec} and \cref{eq:update} in decoder and discuss the its equivariance property. We then discuss the geometric consistency of the decoder. Lastly, we give the implementation of channel selection operations.

\subsubsection{Brief Introduction to Pseudoscalars and Pseudovectors}
\label{app:pseudo}
Pseudoscalars and pseudovectors are like scalars and vectors, but undergoes an additional sign flip under reflection. We use $\widebar{\cdot}$ to indicate a quantity is pseudoscalar or psuedovector. Let $A, B, C \in \R^3$ be normal vectors. pseudoscalar $\widebar{H}$ and pseudovector $\widebar{V}$ can be constructed as follows:
$\widebar{H} = A \cdot (B \times C), \widebar{V} = A \times B$. Note that cross product  obeys the following identity under matrix transformations $(MA) \times (MB) = \det(M)(A \times B)$.  
Let $P$ be a $3\times 3$  reflection transformation with $\det(P) = -1$. It is easy to see that $(PA \times PB) = - P (A \times B)$.  Along with the simple property of $PA \cdot PB = P(A \cdot B)$, we have $P\widebar{H}$ and $P\widebar{V}$
\begin{equation*}
\begin{aligned}
    P\widebar{H} & = PA \cdot (PB \times PC) = PA \cdot (-P(B \times C)) = - A \cdot (B \times C) = - \widebar{H} \\
    P\widebar{V} & = PA \times PB = -P(A \times B) = -\widebar{V}
\end{aligned}
\end{equation*}
Next, we list different ways to generate scalars/vectors/pseudoscalars/pseudovectors. These are used as the basis in \cref{eq:dec}. The details of these constructions for arbitrary order tensors can be found in the documentation of the e3nn package \citep{e3nn}.

\begin{itemize}
\item scalar: i) $\widebar{H} \circ \widebar{H}$ ii) $\widebar{V} \cdot \widebar{V}$ 
\item vector: i) $V \times \widebar{V}$ ii) $\widebar{H} \circ \widebar{V}$ 
\item pseudoscalar: i) $V \cdot \widebar{V}$ ii) $H \circ \widebar{H}$
\item pseudovector: i) $V \times V$ ii) $\widebar{V} \times \widebar{V}$ iii) $ \widebar{H} \circ V $ 
\end{itemize}
The output of \cref{eq:dec} are scalars, pseudoscalars, vectors, and pseudovectors. As the construction of each equation in \cref{eq:dec} ensures that the ``type'' of output matches with the ``type'' of the basis, we immediately get the equivariance guarantee for the outputs of \cref{eq:dec}.

\subsubsection{Details of Equivariant Update Layers}
\label{app:eupdate}
In this subsection, we provide details of equivariant update layers in \cref{eq:update} 
$H_I^{t+1}, V_I^{t+1} = \sigma(H_I^t + \Delta H_I^t, V_I^t + \Delta V_I^t)
$.
\cref{eq:update} updates an invariant scalar feature $H_I \in \R^F$ as the first argument and an equivariant vector feature $V_I \in \R^{F \times 3}$ as the second argument. Under the action of rotation  $Q$, $\sigma$ has to satisfy the following constraint: $[\sigma(H_I, QV_I)]_1 = [\sigma(H_I, V_I)]_1$ and $[\sigma(H_I, QV_I)]_2 = Q[\sigma(H_I, V_I)]_2$. We next introduce the detailed forms of $\sigma$.

Let $L_{5}, L'_{5}, L''_{5}: \R^{2F} \rightarrow \R^{F}$ be MLPs and $W_{\alpha},W_{\beta} \in \R ^{F\times F}$  be two learnable matrices. Denote the invariant input of $\sigma(\cdot, \cdot)$ as ${H'_{I}}^{t+1} := H_{I}^{t} + \Delta {H_{I}}  \in \R^{F}$ and the equivariant input as ${V'_{I}}^{t+1} := V_I^t + \Delta V_I \in \R^{F \times 3}$. The detailed form of $\sigma$ to the update $H^{t+1}$ and $V^{t+1}$ is 
\begin{equation}
\begin{aligned}
 H_{I}^{t+1} & = {H'_{I}}^{t+1}  + L_{5} ( \left[{H'_{I}}^{t+1}, \left\|W_{\beta} {V'_{I}}^{t+1} \right\|\right])  + L'_{5} (\left[{H'_{I}}^{t+1}, \left\| W_{\beta} {V'_{I}}^{t+1}\right\|\right] )  \langle W_{\alpha} {V'_{I}}^{t+1},  W_{\beta} {V'_{I}}^{t+1} \rangle \\
V_{I}^{t+1} & ={V'_{I}}^{t+1} +  L''_{5}( \left[H'^{t+1}_{I}, \left\| W_{\beta} {V'_{I}}^{t+1} \right\|\right] ) {V'_{I}}^{t+1} 
\end{aligned}
\end{equation}
Here $\|\cdot\|: \R^{F\times 3} \rightarrow \R^{F}$ computes the norm for each feature channel. 
$\langle \cdot, \cdot \rangle: \R^{F\times 3} \times \R^{F\times 3} \rightarrow \R^{F}$ is the feature-wise dot product, and $[\cdot, \cdot]: \R^{F\times d_1} \times \R^{F\times d_2} \rightarrow \R^{F\times (d_1+d_2)}$ indicates the concatenation along the second feature channel.
It is easy to show that the update function produces an invariant $H_I^{t+1}$ and an equivariant $V_I^{t+1}$ because 1) dot product and Euclidean norms are invariant under rotation and reflection, and 2) linear combinations of vectors are equivariant. More discussions can be found in \citet{schutt2021equivariant}.

\subsubsection{Implementation of Channel Selection Operations}
\label{appendix:channel_select}

The final prediction of $\dtx$ are compiled from vector channels in $V_{I, \ind(i, C_I)}$. This can be done efficiently by generating pairs of $(I, \ind(i, C_I) )$ used as indices array for advanced indexing in Pytorch. A PyTorch implementation is provided here: 

\begin{lstlisting}[language=Python]
def channel_select(V, m):
    # m: n x 1  CG map for each FG node 
    # V : N x F x 3  vector channels on CG node
    channel_idx = torch.zeros_like( m)
    for cg_type in torch.unique(m): 
        cg_filter = m == cg_type
        # find size of C_I
        k = cg_filter.sum().item() # find size of C_I
        # construct (I, Index(i, C_I))
        channel_idx[cg_filter] = torch.LongTensor(list(range(k))) 
    dx = V[m, channel_idx]
    return dx
\end{lstlisting}

\subsubsection{Ensuring Geometry Self-consistency}
\label{appendix:decoder_compat}
Just as the lift map $\lift$ satisfies $\proj \lift X= X$ , the learnt decoder $\dec$ also needs be compatible with pre-defined surjective CG map $\proj$, i.e. $\proj(\dec(X, z)) = X$. 

\begin{restatable}[]{proposition}{deccompatibility}
$\dec (X, z) = \lift X + \Delta \widetilde{x} - \lift \proj \Delta \widetilde{x}$ satisfies $\proj(\dec(X, z)) = X$. 
 \label{property:deccompatibility}
\end{restatable}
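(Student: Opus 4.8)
The plan is to apply the projection operator $\proj$ to the proposed closed form of $\dec(X,z)$ and expand using linearity of $\proj$. Writing $\dec(X,z) = \lift X + \dtx - \lift\proj\dtx$, where every summand lies in $\R^{n\times3}$ and $\proj\in\R^{N\times n}$ acts by left multiplication, linearity gives
\begin{equation*}
\proj\,\dec(X,z) \;=\; (\proj\lift)\,X \;+\; \proj\dtx \;-\; (\proj\lift)\,\proj\dtx .
\end{equation*}
So the whole claim reduces to the single identity $\proj\lift = I_N$.

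For that identity I would invoke \cref{prop:composition}, already proved in the appendix, which yields $\proj\lift = I_N$ from the normalization $\proj_{I,i} = w_i / \sum_{j\in C_I} w_j$ together with the definition of $\lift$ in \cref{def:lift}. Substituting $\proj\lift = I_N$ into the display, the first term becomes $X$ and the third term becomes $\proj\dtx$, which cancels the second term exactly, leaving $\proj\,\dec(X,z) = X$ — which is precisely \reqone. Conceptually, the corrective term $-\lift\proj\dtx$ is engineered so that the $\proj$-image of $\dec(X,z)$ is independent of whatever relative-position field $\dtx$ the equivariant convolutions output: the $\dtx$ contribution enters as $\proj\dtx - \proj\dtx = 0$.

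There is essentially no analytic obstacle here; the only point that needs care is the hypothesis behind \cref{prop:composition}, namely that $\sum_{j\in C_I} w_j > 0$ for every bead $I$ (each CG bead must contain at least one atom of positive projection weight). This always holds in practice, since otherwise $X_I = \sum_i \proj_{I,i}\, x_i$ would be an undefined $0/0$ quantity; and, as the toy example in the appendix illustrates, even when some individual $w_i$ vanish (e.g.\ $\alpha$-carbon-only mappings) one still has $\proj\lift = I_N$, so the argument carries over verbatim. A quick type check — all three summands of $\dec(X,z)$ are $n\times3$ and $\proj$ sends them to $N\times3$ — confirms the manipulations are well-formed, and no further case analysis is needed.
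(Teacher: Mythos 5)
Your proof is correct and is essentially identical to the paper's: apply $\proj$ to the decomposition by linearity, invoke $\proj\lift = I_N$ (\cref{prop:composition}), and observe that the two $\proj\dtx$ terms cancel. The extra remarks on well-definedness of the weights are fine but not needed beyond what \cref{prop:composition} already assumes.
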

\begin{proof}
From linearity of $\proj$ and the property that $\proj \lift = I$, we have 
    \begin{equation*}
    \begin{aligned}
        \proj(\dec(X,z)) = \proj \lift X + \proj \Delta \widetilde{x} - \proj \lift\proj(\Delta \widetilde{x}) 
                     = X +  \proj \Delta \widetilde{x} -\proj\Delta \widetilde{x}
                      = X  
    \end{aligned}
    \end{equation*}  
\end{proof}

\section{Experiments Details}
\label{appendix:experiment}
\subsection{Baselines}
\textbf{Equivariant linear model.}  \citep{wang2019coarse}  We construct the first model following \cite{wang2019coarse}, the reconstructed coordinates are obtained from linear combinations of the coarse coordinates with learnable coefficients $D_{i, I}$.
$
    \widetilde{x}_{i, m} :=  \sum_I D_{i, I} X_{I, m}
$
where $m \in [3]$ indicates the Cartesian index of the coordinates. To incorporate translational equivariance which the model fails to incorporate, we recenter the FG structure and the mapped CG structure to its geometric center for training and testing of the model. 
\begin{table}[htp]
\centering
\caption{Hyper-parameters for the MLP and equivariant linear baseline models. We used the same set of parameters for both alanine dipeptide and chignolin datasets.
}

\renewcommand{\arraystretch}{1.1}
\begin{tabular}{@{}llll@{}}\toprule
&Alanine Dipeptide & Chignolin \\ \midrule
$\gL_{\textnormal{graph}}$ weight $\gamma$ &5 &5 \\
Training epochs &500 &500 \\
Learning rate &0.0001 &0.0001 \\
Batch size &32 &32 \\
Order for multi-hop graph &2 &2 \\
\bottomrule
\end{tabular}
\label{tab:baseline_param}
\end{table}

\textbf{Multi-Layer Perceptrons.}\citep{an2020machine} We use an MLP regressor as proposed by \cite{an2020machine} to map between CG space to FG space. After flattening the coordinate arrays, the MLP maps $\R^{3N}$ to $\R^{3n}$. The model does not incorporate $E(3)$ equivariance. We use MLPs with 2 hidden layers with hidden dimensions of size $3n$ with $\texttt{ReLU}$ as the activation function. 




\begin{table}[htp]
\centering
\caption{hyperparameters used for CGVAE}
\label{tab:hyperparam}
\renewcommand{\arraystretch}{1.1}
\begin{tabular}{@{}l|ll@{}}
\toprule                                                                     & Alanine Dipeptide & Chignolin \\ \midrule
\begin{tabular}[c]{@{}l@{}} Dataset size \end{tabular}                  & 20000             & 10000         \\
\begin{tabular}[c]{@{}l@{}} Edge feature dimension $K$ \end{tabular}         & 8                & 10        \\
\begin{tabular}[c]{@{}l@{}}$\gL_{\textnormal{graph}}$ weight. $\gamma$ \end{tabular}                        & 25.0              & 50.0         \\
\begin{tabular}[c]{@{}l@{}}Encoder conv depth $T^{\textnormal{enc}}$\end{tabular}  & 4                 & 2         \\
\begin{tabular}[c]{@{}l@{}}Prior conv depth $T^{\textnormal{prior}}$\end{tabular}  & 4                 & 2         \\
\begin{tabular}[c]{@{}l@{}}Decoder conv depth $T^{\textnormal{dec}}$ \end{tabular} & 5                 & 9         \\
\begin{tabular}[c]{@{}l@{}} $\dcut$\end{tabular}                      & 8.5               & 12.0       \\
\begin{tabular}[c]{@{}l@{}} $\Dcut$\end{tabular}                      & 9.5              & 25.0      \\
Node embedding dimension $F$                                                                     & 600               & 600      \\
\begin{tabular}[c]{@{}l@{}}Batch  size\end{tabular}                     & 32                & 2         \\
\begin{tabular}[c]{@{}l@{}}Learning  rate\end{tabular}                  & 8e-5              & 1e-4      \\
\begin{tabular}[c]{@{}l@{}}Activation  function\end{tabular}            & swish             & swish     \\
\begin{tabular}[c]{@{}l@{}}Training epochs\end{tabular}                 & 500               & 100        \\
Order for multi-hop graph &2 &2 \\
$\beta$ regularization strength of KL divergence                                             & 0.05              & 0.05   \\ \bottomrule
\end{tabular}
\end{table}

\subsection{Details for CGVAE}
\label{appendix:map_protocl}

\textbf{Parameterize mappings with Auto-Grain.}  Our model works with arbitrary mapping protocols that satisfy the requirement described in \cref{section:cg_def}. In practice, we find mappings that assign close atoms together perform better. We parameterize our CG mapping using Auto-Grain proposed in \cite{wang2019coarse} which readers can refer for details of the method. Auto-Grain parameterizes the discrete assignment matrix $C_{I,i} \in \R^{N \times n}$ in a differentiable way using Gumbel-Softmax transformation \cite{jang2016categorical} with a tunable fictitious temperature $\tau$ for different level of discreteness. The resulting CG transformation using the center of geometries is $M_{I,i} = \frac{C_{I,i}}{\sum_{i=1}^n C_{i,I}}$. A decoding matrix is used to equivariantly reconstruct $x$ based on $Mx$ with minimization of the reconstruction MSD loss. To encourage assignment of closed-together atoms into the same CG bead, we introduce a new geometric loss : $\gL_{\textnormal{geo}} = \frac{1}{n} \sum_{i=1}^n \sum_{k=1}^3 ( (C^T M x)_{i, k} - x_{i,k} )^2$ which penalizes mappings that groups atoms that are far away from each other. $C^T$ can be thought as the lifting operator of the soft-parameterized $M$, satisfying $ M C^{T} = I_{N} $. The total loss for Auto-Grain is $\gL_{MSD} + a \gL_{\textnormal{geo}}$ where $a$ re-weights the two loss contributions. We choose $a = 0.25$ in our experiments. For each experiment, we randomly select 1000 samples to train Auto-Grain to obtain a mapping. We train our model for 1500 epochs, with $\tau$ starting at 1.0 and gradually decreasing its value by 0.001 till it reaches a value of 0.025. The optimization is stochastic, and the obtained mapping will be slightly different, but compatible with CGVAE, for different parameter initializations. We find mapping obtained this way generate higher-quality samples than backbone-based CG protocol (introduced below), especially at low resolution for Chignolin. We present results obtained from CGVAE using the two mapping protocols in \cref{fig:mapping_effect}. 

\textbf{Hyperparameters for CGVAE.}
We use the hyperparameter set described in \cref{tab:hyperparam} for the CGVAE experiments we presented in the main text. The same set of hyperparameters are used for all choices of $N$. For optimization, we use the Adam optimizer \citep{kingma2014adam}. We additionally apply a learning rate scheduler to adaptively reduce learning rate when convergence is reached. We use \texttt{torch.optim.lr\_scheduler.ReduceLROnPlateau} implemented in Pytorch \cite{pytorch} with \texttt{patience=15} and \texttt{factor=0.3}.

\begin{figure*}
\centering
\includegraphics[width=0.77\textwidth]{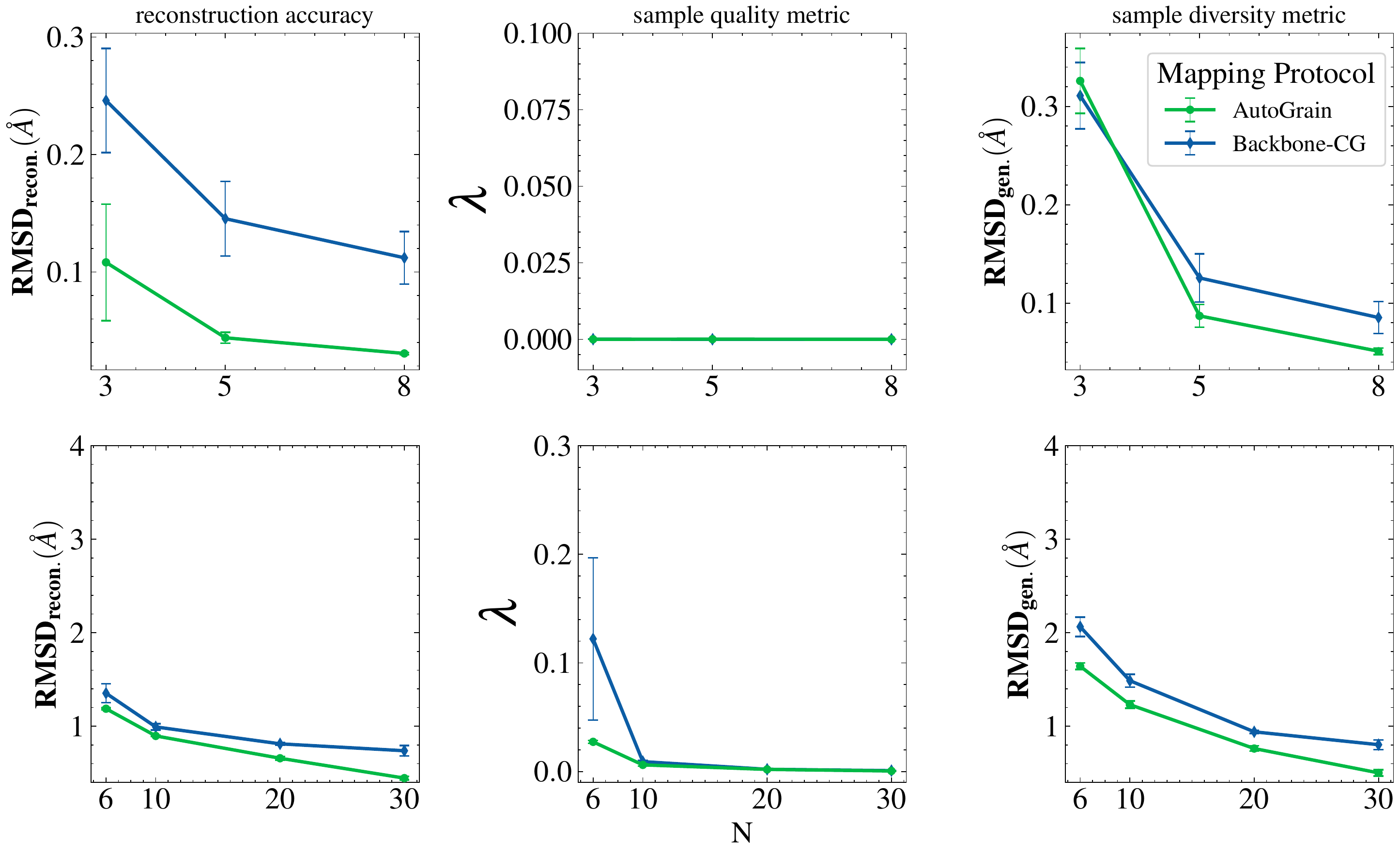}
\caption{Comparing CGVAE results for alanine dipeptide (top) and chignolin (bottom) using two different mapping protocols. In general, the results from the two mappings are comparable, yet Auto-Grain yields better sample qualities for low-resolution Chignolin representations.}
\label{fig:mapping_effect}
\end{figure*}

\textbf{CG based on backbones.} Performing CG transformation based on backbones is an alternative mapping strategy that groups neighboring atoms together. We randomly partition the linear backbone chain of the protein into $N$ segments. We then assign the side chain atoms to the closest backbone segments based on the center of geometry of the backbone segments. We infer the backbone structure of the protein using the \texttt{MDTraj} package \citep{McGibbon2015MDTraj}. After mapping is obtained, we compute $X$ as the center of geometry.



\subsection{Evaluation Metrics }
\label{appendix:eval_metrics}

\textbf{Reconstruction accuracy.} For the reconstruction task, we are interested in the difference between the original reference atomistic structure $x$ and the reconstructed structure $\widetilde{x}$. This informs how well a model can reconstruct samples. For the training of CGVAEs, we remove the stochastic reparameterization to train a deterministic auto-encoders. This choice is to align our evaluation with our deterministic baselines. The discrepancy between $x$ and $\widetilde{x}$ can be evaluated with root mean square distance $\textnormal{RMSD}_{\textnormal{recon}} = \sqrt{\frac{\sum_{i=1}^{n} (x_i - \widetilde{x}_i)^2}{n}}.$

\textbf{Sample qualities.} The sampling task is performed based on the information of CG coordinates only. The invariant features $z \sim p_{\psi}(z| X) $ is then used to generate coordinates with the decoding function $\dec(X, z)$. We are primarily interested in the quality of the generated geometries by evaluating how well  the generated geometry ensembles preserve the bond graph $\mathcal{G}_{\textnormal{gen}}$ compared with the graph $\mathcal{G}_{\textnormal{mol.}}$ induced by the reference structure in the data. We infer $\mathcal{G}_{\textnormal{gen}}$ and $\mathcal{G}_{\textnormal{mol.}}$ with the protocol described in \cref{app:graph_struct}. We quantify the similarity of the two generated graphs by computing the minimum graph edit distance. Because the generated geometries preserve the number of nodes and their identities, we simply need to compute the number of edge addition and removal operations required between $\mathcal{G}_{\textnormal{gen}}$ and $\mathcal{G}_{\textnormal{mol.}}$ and we denote $\mathcal{E}_{\textnormal{gen}}$ and $\mathcal{E}_{\textnormal{mol.}}$ as their edge sets respectively. Because the atom orders of the reference graph and the generated graphs are the same, the graph edit distance is the size of the symmetric difference ($\Delta$), \textnormal{i.e}, the union without the intersection of the two edge sets. We define the quantify $\lambda(\mathcal{E}_{\textnormal{gen}}, \mathcal{E}_{\textnormal{mol.}})$, the graph edit distance normalized by the size of the edge set of the original graph $\mathcal{E}_{\textnormal{mol.}}$.
\begin{equation*}
  \lambda(\mathcal{E}_{\textnormal{gen}}, \mathcal{E}_{\textnormal{mol}}) = \frac{\textnormal{GED}(\mathcal{E}_{\textnormal{gen}}, \mathcal{E}_{\textnormal{mol}})}{|\mathcal{E}_{\textnormal{mol}}|} = \frac{\mathcal{E}_{\textnormal{gen}} \Delta \mathcal{E}_{\textnormal{mol}}}{|\mathcal{E}_{\textnormal{mol}}|}
\end{equation*}
where $\Delta$ denotes the set difference. If $\lambda=0$, it means that the molecular graph of the generated geometries is identical to the reference molecular graph. We evaluate $\lambda$ for graphs for both the all-atom systems and heavy-atom only systems. We make the distinction because heavy-atoms encode most of the important dynamics of the system. Also, substructures that involve hydrogen are highly fluctuating, make the model (especially for models with few $N$) hard to capture its distribution fully. Additionally, we also compute the ratio of valid graphs of the generated samples. Such ratio tends to be low for Chignolin because of its more complex structures. For sample diversity metrics, we only calculate on structures with valid graphs. These values can be found in \cref{tab:dipep} and \cref{tab:chig}. 

\textbf{Sample diversities.} We are also interested in the diversity of sampled geometries for our proposed generative model. Because $\proj$ is a surjective map, each CG configuration corresponds to multiple possible configurations. The more diverse the sampled geometries given $X$, the more information that $z$ has learned for the missing information encoded in $z$. For this purpose, we want to evaluate RMSD compared with the reference geometry to quantify how well the model can generate samples that are not in the reference set. The evaluation is only performed for generated samples with valid molecular graphs. The diversity $\textnormal{RMSD}$ is defined as: 
  $\rmsdgen= \sqrt{\frac{\sum_i^{n} \int p_{\psi}(z|X) (x_i - \mathbf{Dec}(X, z) )^2}{n}}.$
Based on the discussion above, a higher value of $\rmsdgen$ indicates better diversities of generated samples. To compute $\rmsdgen$, we sample 32 values of $z$ for each $X$ in the test set. 

\subsection{Tables and Sample Geometries}

\label{app:tables}
We show tabulated benchmark results in \cref{tab:dipep} for alanine dipeptide and chignolin datasets respectively for both heavy-atom and all-atom geometries. For  $\rmsdgen$, the higher the value the better the model is in generating diverse samples. To complement the results of heavy atoms shown in \cref{fig:quant_results} for heavy atoms, we also include plots for all-atom reconstruction and sampling in \cref{fig:all_atom_results}. We also compare generated chignolin samples ($N=6$) from different methods in \cref{fig:compare_samples}. 

\begin{figure}[!tbp]
  \centering
  \vspace{-2pt}
    \includegraphics[width=0.8\textwidth]{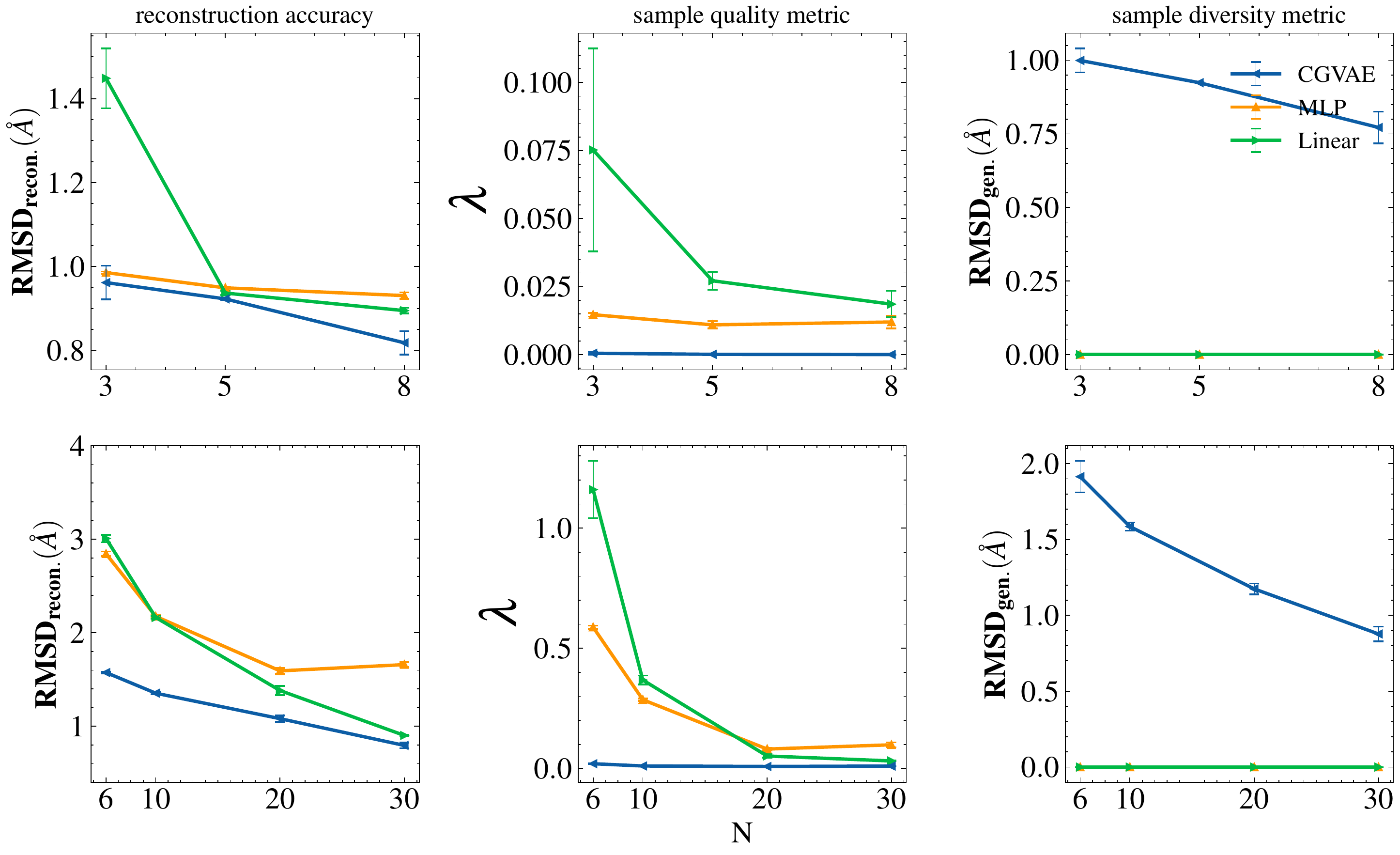}
  \vspace{-2pt}
  \caption{ Benchmarks for all-atom reconstruction and sampled geometries at different resolutions for alanine dipeptide (top) and chignolin (bottom). $\rmsdgen$ is not available for our deterministic baselines because the generation process is deterministic. }
  \vspace{-10pt}
  \label{fig:all_atom_results}
\end{figure}

\begin{figure}
    \centering
    \includegraphics[width=0.66\textwidth]{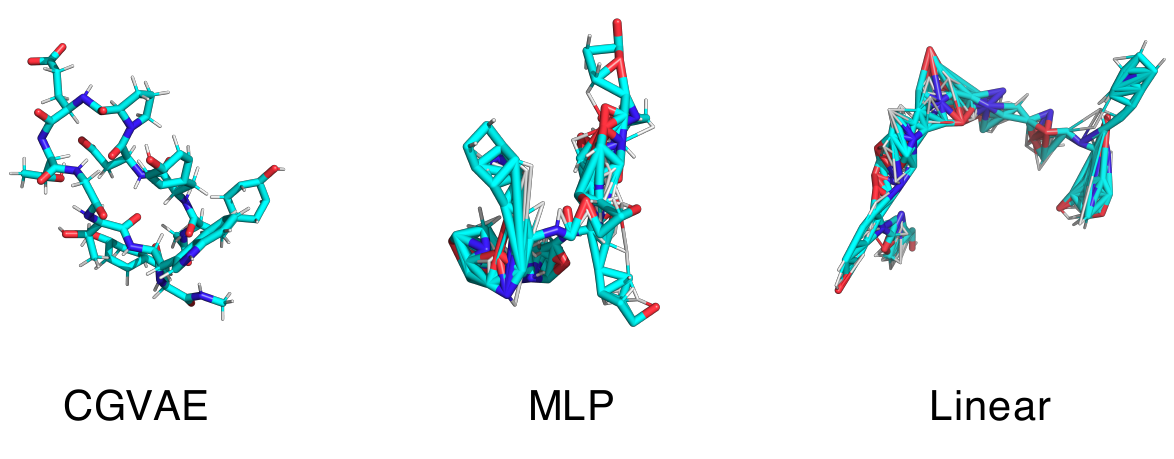}
    \caption{Comparison between generated chignolin samples ($N=6$) from CGVAE and backmapped samples from baseline methods.}
    \label{fig:compare_samples}
\end{figure}

\begin{table}[!htp]\centering
\caption{Alanine dipeptide benchmarks}\label{tab:dipep}
\scriptsize
\begin{tabular}{@{}lllllll@{}}\toprule
\multicolumn{2}{c}{\multirow{2}{*}{Metric}} &\multirow{2}{*}{Model} &\multicolumn{3}{c}{$N$} \\\cmidrule{4-6}
& & &3 &5 &8 \\\midrule
\multirow{9}{*}{Heavy atom} &$\textbf{RMSD}_{\textnormal{recon}}$ &linear &$1.211 \pm 0.075$ &$0.277 \pm 0.032$ &$0.086 \pm 0.009$ \\
& &MLP &$0.439 \pm 0.003$ &$0.288 \pm 0.004$ &$0.269 \pm 0.009$ \\
& &CGVAE &$0.108 \pm 0.050$ &$0.044 \pm 0.005$ &$0.031 \pm 0.001$ \\
\cmidrule{2-6}
&$\lambda$ &linear &$0.081 \pm 0.045$ &$0.001 \pm 0.001$ &$0.000 \pm 0.000$ \\
& &MLP &$0.011 \pm 0.001$ &$0.001 \pm 0.000$ &$0.001 \pm 0.000$ \\
& &CGVAE &$0.000 \pm 0.000$ &$0.000 \pm 0.000$ &$0.000 \pm 0.000$ \\
\cmidrule{2-6}
&$\textbf{RMSD}_{\textnormal{gen}}$ &linear &$0.000 \pm 0.000$ &$0.000 \pm 0.000$ &$0.000 \pm 0.000$ \\
& &MLP &$0.000 \pm 0.000$ &$0.000 \pm 0.000$ &$0.000 \pm 0.000$ \\
& &CGVAE &$0.326 \pm 0.033$ &$0.087 \pm 0.012$ &$0.051 \pm 0.003$ \\
\cmidrule{2-6}
&Valid graph ratio &linear &$0.561 \pm 0.153$ &$0.990 \pm 0.009$ &$1.000 \pm 0.000$ \\
& &MLP &$0.908 \pm 0.011$ &$0.992 \pm 0.001$ &$0.993 \pm 0.002$ \\
& &CGVAE &$1.000 \pm 0.000$ &$1.000 \pm 0.000$ &$1.000 \pm 0.000$ \\
\midrule
\multirow{9}{*}{All atoms} &$\textbf{RMSD}_{\textnormal{recon}}$ &linear &$1.448 \pm 0.071$ &$0.937 \pm 0.005$ &$0.895 \pm 0.006$ \\
& &MLP &$0.986 \pm 0.002$ &$0.949 \pm 0.003$ &$0.931 \pm 0.008$ \\
& &CGVAE &$0.962 \pm 0.040$ &$0.923 \pm 0.002$ &$0.818 \pm 0.028$ \\
\cmidrule{2-6}
&$\lambda$ &linear &$0.075 \pm 0.037$ &$0.027 \pm 0.003$ &$0.019 \pm 0.005$ \\
& &MLP &$0.015 \pm 0.001$ &$0.011 \pm 0.001$ &$0.012 \pm 0.002$ \\
& &CGVAE &$0.000 \pm 0.000$ &$0.000 \pm 0.000$ &$0.000 \pm 0.000$ \\
\cmidrule{2-6}
&$\textbf{RMSD}_{\textnormal{gen}}$ &linear &$0.000 \pm 0.000$ &$0.000 \pm 0.000$ &$0.000 \pm 0.000$ \\
& &MLP &$0.000 \pm 0.000$ &$0.000 \pm 0.000$ &$0.000 \pm 0.000$ \\
& &CGVAE &$1.000 \pm 0.041$ &$0.924 \pm 0.001$ &$0.772 \pm 0.054$ \\
\cmidrule{2-6}
&Valid graph ratio &linear &$0.230 \pm 0.151$ &$0.667 \pm 0.019$ &$0.802 \pm 0.048$ \\
& &MLP &$0.732 \pm 0.015$ &$0.808 \pm 0.023$ &$0.790 \pm 0.038$ \\
& &CGVAE &$0.990 \pm 0.010$ &$0.998 \pm 0.001$ &$0.999 \pm 0.000$ \\
\bottomrule
\end{tabular}
\end{table}

\begin{table}[!htp]\centering
\caption{Chignolin benchmarks}\label{tab:chig}
\scriptsize
\begin{tabular}{@{}llllllll@{}}\toprule
\multicolumn{2}{c}{\multirow{2}{*}{Metric}} &\multirow{2}{*}{Model} &\multicolumn{4}{c}{N} \\\cmidrule{4-7}
& & &6 &10 &20 &30 \\\midrule
\multirow{12}{*}{heavy atom} &\multirow{3}{*}{$\textbf{RMSD}_{\textnormal{recon}}$} &linear &$2.723 \pm 0.055$ &$1.845 \pm 0.013$ &$1.077 \pm 0.036$ &$0.702 \pm 0.011$ \\
& &MLP &$2.558 \pm 0.026$ &$1.894 \pm 0.013$ &$1.318 \pm 0.030$ &$1.388 \pm 0.024$ \\
& &CGVAE &$1.188 \pm 0.010$ &$0.897 \pm 0.007$ &$0.658 \pm 0.020$ &$0.445 \pm 0.019$ \\
\cmidrule{2-7}
&\multirow{3}{*}{$\lambda$} &linear &$1.124 \pm 0.127$ &$0.411 \pm 0.028$ &$0.085 \pm 0.011$ &$0.021 \pm 0.002$ \\
& &MLP &$0.611 \pm 0.009$ &$0.331 \pm 0.008$ &$0.125 \pm 0.008$ &$0.146 \pm 0.010$ \\
& &CGVAE &$0.027 \pm 0.001$ &$0.006 \pm 0.001$ &$0.002 \pm 0.000$ &$0.001 \pm 0.000$ \\
\cmidrule{2-7}
&\multirow{3}{*}{$\textbf{RMSD}_{\textnormal{gen}}$} &linear &$0.000 \pm 0.000$ &$0.000 \pm 0.000$ &$0.000 \pm 0.000$ &$0.000 \pm 0.000$ \\
& &MLP &$0.000 \pm 0.000$ &$0.000 \pm 0.000$ &$0.000 \pm 0.000$ &$0.000 \pm 0.000$ \\
& &CGVAE &$1.642 \pm 0.034$ &$1.231 \pm 0.038$ &$0.762 \pm 0.028$ &$0.502 \pm 0.033$ \\
\cmidrule{2-7}
&\multirow{3}{*}{valid graph ratio} &linear &$0.000 \pm 0.000$ &$0.000 \pm 0.000$ &$0.002 \pm 0.002$ &$0.082 \pm 0.022$ \\
& &MLP &$0.000 \pm 0.000$ &$0.000 \pm 0.000$ &$0.000 \pm 0.000$ &$0.000 \pm 0.000$ \\
& &CGVAE &$0.415 \pm 0.032$ &$0.716 \pm 0.034$ &$0.843 \pm 0.015$ &$0.955 \pm 0.012$ \\
\cmidrule{2-7}
\multirow{12}{*}{all atom} &\multirow{3}{*}{$\textbf{RMSD}_{\textnormal{recon}}$} &linear &$3.008 \pm 0.040$ &$2.163 \pm 0.013$ &$1.382 \pm 0.050$ &$0.902 \pm 0.007$ \\
& &MLP &$2.844 \pm 0.026$ &$2.175 \pm 0.020$ &$1.592 \pm 0.031$ &$1.659 \pm 0.027$ \\
& &CGVAE &$1.574 \pm 0.009$ &$1.353 \pm 0.012$ &$1.081 \pm 0.033$ &$0.795 \pm 0.029$ \\
\cmidrule{2-7}
&\multirow{3}{*}{$\lambda$} &linear &$1.160 \pm 0.119$ &$0.368 \pm 0.018$ &$0.052 \pm 0.007$ &$0.031 \pm 0.002$ \\
& &MLP &$0.588 \pm 0.006$ &$0.286 \pm 0.005$ &$0.081 \pm 0.005$ &$0.098 \pm 0.011$ \\
& &CGVAE &$0.020 \pm 0.001$ &$0.010 \pm 0.001$ &$0.008 \pm 0.002$ &$0.010 \pm 0.000$ \\
\cmidrule{2-7}
&\multirow{3}{*}{$\textbf{RMSD}_{\textnormal{gen}}$} &linear &$0.000 \pm 0.000$ &$0.000 \pm 0.000$ &$0.000 \pm 0.000$ &$0.000 \pm 0.000$ \\
& &MLP &$0.000 \pm 0.000$ &$0.000 \pm 0.000$ &$0.000 \pm 0.000$ &$0.000 \pm 0.000$ \\
& &CGVAE &$1.914 \pm 0.103$ &$1.586 \pm 0.027$ &$1.175 \pm 0.036$ &$0.877 \pm 0.049$ \\
\cmidrule{2-7}
&\multirow{3}{*}{valid graph ratio} &linear &$0.000 \pm 0.000$ &$0.000 \pm 0.000$ &$0.000 \pm 0.000$ &$0.000 \pm 0.000$ \\
& &MLP &$0.000 \pm 0.000$ &$0.000 \pm 0.000$ &$0.000 \pm 0.000$ &$0.000 \pm 0.000$ \\
& &CGVAE &$0.072 \pm 0.027$ &$0.133 \pm 0.024$ &$0.278 \pm 0.099$ &$0.132 \pm 0.011$ \\
\bottomrule
\end{tabular}
\end{table}

\subsection{On Pseudoscalar Initialization for $N = 3$}
\label{app:pseudo_init}
As discussed in the main text, for $N=3$, the vector basis sets are confined in a plane, so it is required to initialize pseudoscalar update with non-zero values to break the reflection symmetry. As it is shown in \cref{fig:pseudo_init}, the decoder with pseudoscalar initialized with $0_{F}$ produces coordinates that are confined in a plane, while a non-zero pseudoscalar initialization produces proper geometries that span the 3D space. For the case $N > 3$, it is less likely for CG beads to be in the same plane. However, when it is needed, one can always initialize pseudoscalar as non-zeros to break the symmetry even for larger $N$ cases. 

\begin{figure}
    \centering
    \includegraphics[width=0.8\textwidth]{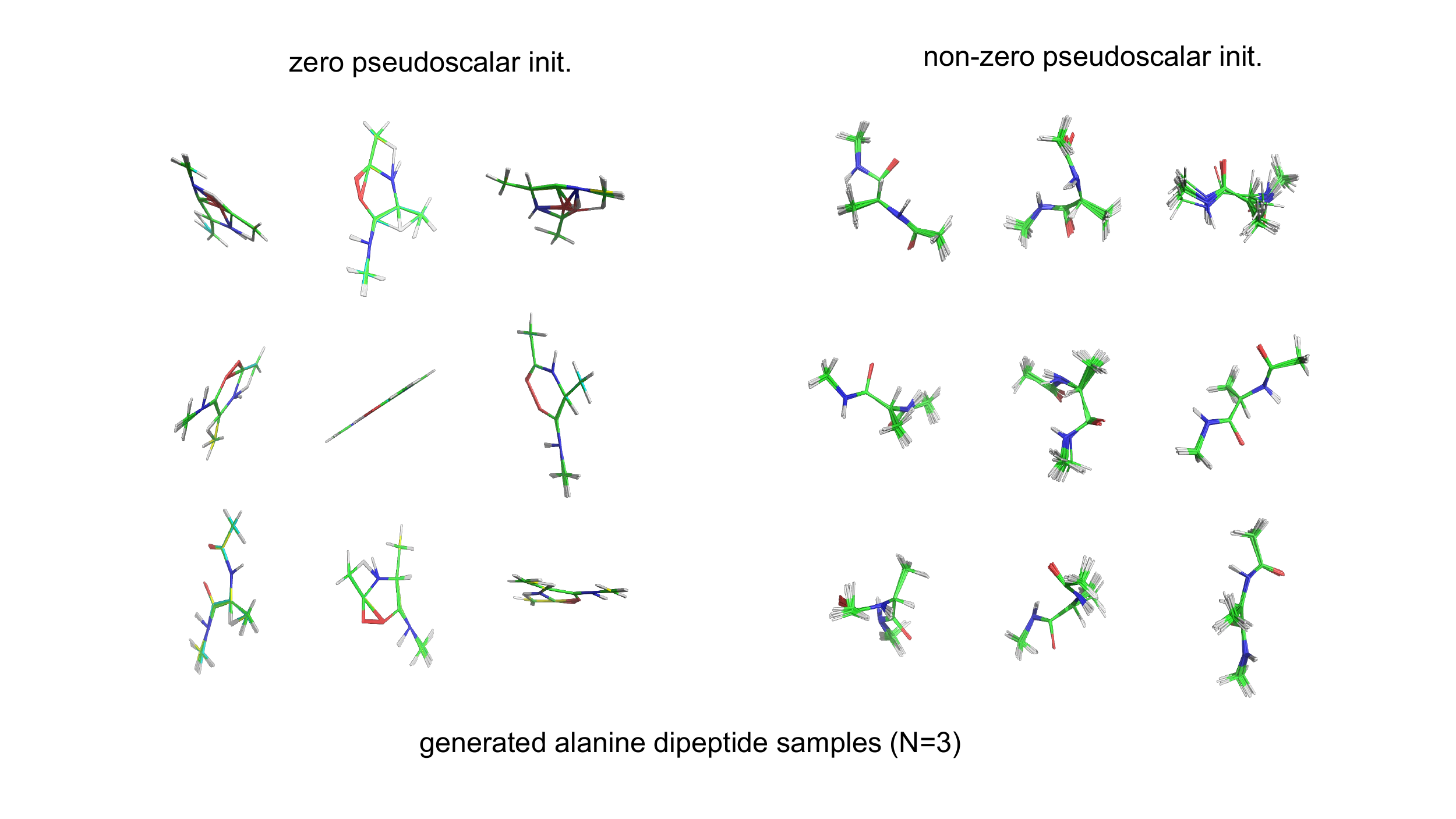}
    \caption{Comparing zero and non-zero pseudoscalar initializations for CGVAE at the resolution of $N=3$ for alanine dipeptide. }
    \label{fig:pseudo_init}
\end{figure}


    

\end{document}